\documentclass{article}

\PassOptionsToPackage{table}{xcolor}

\usepackage{microtype}
\usepackage{graphicx}
\usepackage{subcaption}
\usepackage{booktabs}

\usepackage{hyperref}

\usepackage[accepted]{icml2026}

\usepackage{amsmath}
\usepackage{amssymb}
\usepackage{mathtools}
\usepackage{amsthm}

\usepackage[capitalize,noabbrev]{cleveref}

\theoremstyle{plain}
\newtheorem{theorem}{Theorem}[section]
\newtheorem{proposition}[theorem]{Proposition}

\theoremstyle{definition}

\newtheorem{assumption}[theorem]{Assumption}
\theoremstyle{remark}
\newtheorem{remark}[theorem]{Remark}

\usepackage[textsize=tiny]{todonotes}

\usepackage{xcolor}
\usepackage{algorithm}
\usepackage{algorithmic}
\usepackage{thmtools, thm-restate}

\usepackage{minitoc}
\usepackage{url}
\usepackage{etoc}
\usepackage{multirow}
\usepackage{array}
\usepackage{enumitem}
\usepackage{bbm}
\definecolor{lightgray}{gray}{0.9}
\definecolor{palegray}{gray}{0.98}
\definecolor{ma_blue}{HTML}{244983}
\definecolor{ma_green}{HTML}{2A8153}
\definecolor{ma_red}{HTML}{CF1C3A}
\definecolor{googleblue}{HTML}{4285F4}
\definecolor{googlered}{HTML}{EA4335}
\definecolor{googleyellow}{HTML}{FBBC05}
\definecolor{googlegreen}{HTML}{34A853}

\newenvironment{sproof}{%
  \proof}{\endproof}

\newenvironment{propappx}[2]{%
  \par\medskip\noindent\textbf{Proposition \ref{#1}} (#2)\textbf{.}\itshape}{\par\medskip}

\newcommand{\magigcode}{\textcolor{magenta}{\url{https://github.com/leekwoon/ma-gig/}}}

\icmltitlerunning{Manifold-Aligned Guided Integrated Gradients for Reliable Feature Attribution}

\begin{document}

\twocolumn[
	\icmltitle{Manifold-Aligned Guided Integrated Gradients for Reliable Feature Attribution}

	\icmlsetsymbol{equal}{*}
	\icmlsetsymbol{equaladvising}{\textdagger}
	\begin{icmlauthorlist}
		\icmlauthor{Soyeon Kim}{kaist,ineeji}
		\icmlauthor{Seongwoo Lim}{ineeji}
		\icmlauthor{Kyowoon Lee}{equaladvising,innocore}
		\icmlauthor{Jaesik Choi}{equaladvising,kaist,ineeji}
	\end{icmlauthorlist}

	\icmlaffiliation{kaist}{Kim Jaechul Graduate School of AI, Korea Advanced Institute of Science and Technology (KAIST), Daejeon, Korea}
	\icmlaffiliation{innocore}{KAIST InnoCORE LLM, KAIST, Daejeon, Korea}
	\icmlaffiliation{ineeji}{INEEJI, Seongnam, Korea}

	\icmlcorrespondingauthor{Jaesik Choi}{jaesik.choi@kaist.ac.kr}
	\icmlcorrespondingauthor{Kyowoon Lee}{leekwoon@kaist.ac.kr}

	\icmlkeywords{Input Attribution, Feature Attribution, Integrated Gradient, Data Manifold, Variational Autoencoder, Path Method, ICML}

	\vskip 0.3in
]

\printAffiliationsAndNotice{\textdagger\ Equal advising.}

\begin{abstract}
Feature attribution is central to diagnosing and trusting deep neural networks, and Integrated Gradients (IG) is widely used due to its axiomatic properties. However, IG can yield unreliable explanations when the integration path between a baseline and the input passes through regions with noisy gradients. While Guided Integrated Gradients reduces this sensitivity by adaptively updating low-gradient-magnitude features, input-space guidance still produces intermediate inputs that deviate from the data manifold. To address this limitation, we propose \emph{Manifold-Aligned Guided Integrated Gradients} (MA-GIG), which constructs attribution paths in the latent space of a pre-trained variational autoencoder. 
By decoding intermediate latent states, MA-GIG biases the path toward the learned generative manifold and reduces exposure to implausible input-space regions. 
Through qualitative and quantitative evaluations, we demonstrate that MA-GIG produces faithful explanations by aggregating gradients on path features proximal to the input. 
Consequently, our method reduces off-manifold noise and outperforms prior path-based attribution methods across multiple datasets and classifiers.
Our code is available at \magigcode
\end{abstract}
\section{Introduction}

\begin{figure*}[ht]
    \centering
    \includegraphics[width=\linewidth]{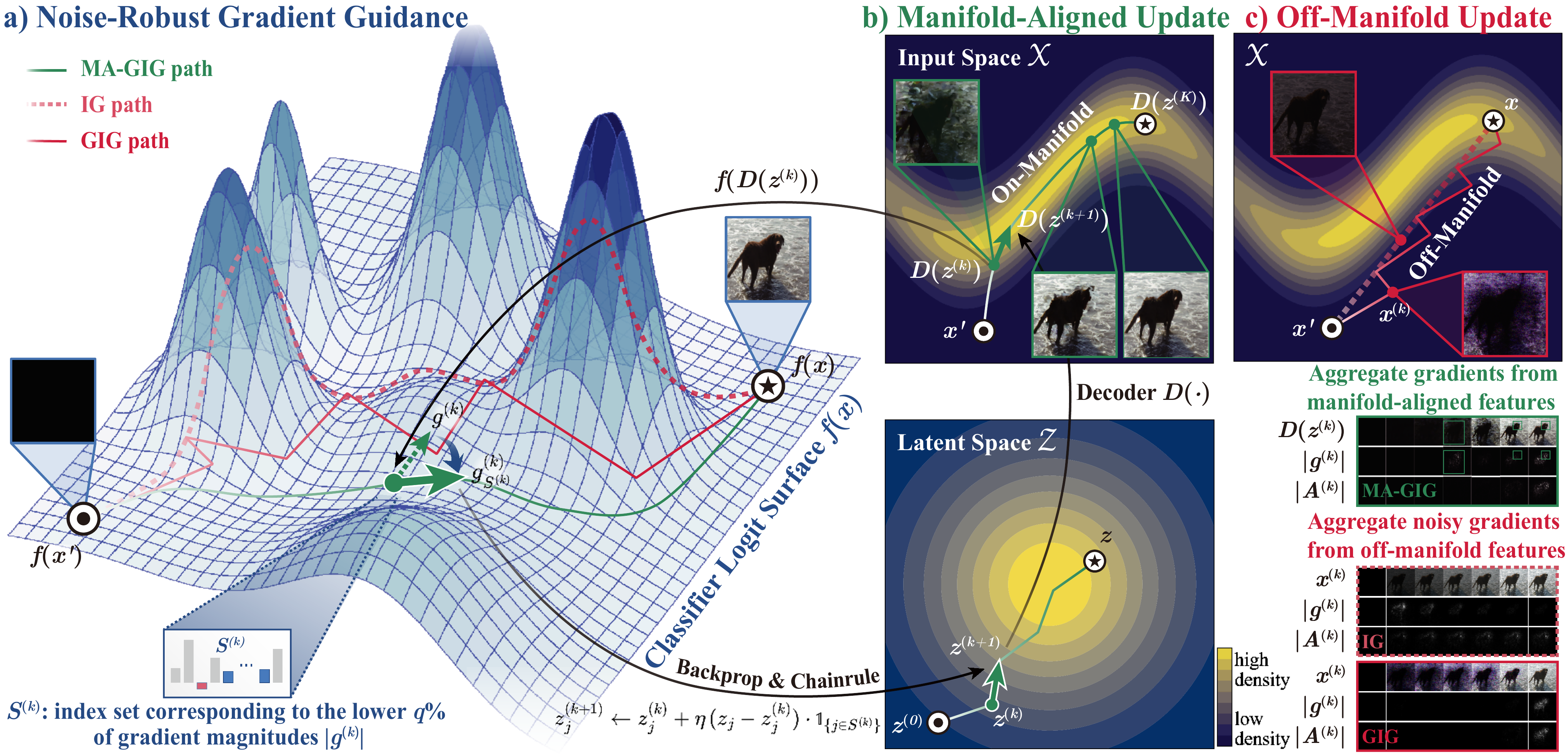}
    \caption{\textbf{Overview of Manifold-Aligned Guided Integrated Gradients (MA-GIG).} 
    \textbf{a) Noise-Robust Gradient Guidance:} The visualization compares integration paths from the baseline \protect{\raisebox{-.05cm}{\includegraphics[height=.35cm]{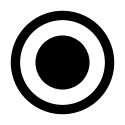}}} to the input \protect{\raisebox{-.05cm}{\includegraphics[height=.35cm]{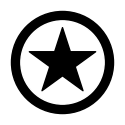}}} on the classifier's logit surface $f(x)$. The \textcolor{ma_green}{\textbf{MA-GIG path (green solid line)}} traverses noise-robust regions, avoiding the high-frequency regions traversed by the \textcolor{ma_red!50}{\textbf{linear IG path (pink dotted line)}}. This is achieved by a gradient magnitude filter based on $S^{(k)}$ that selects the lower $q$\% of gradients to suppress noise.
    \textbf{b) Manifold-Aligned Update:} The mechanism links the latent space $\mathcal{Z}$ and the learned data manifold. By updating $z^{(k)}$ selectively based on $S^{(k)}$, the trajectory is biased toward high-density regions (yellow). This encourages significant gradients to arise near the input $x$ where semantic features are progressively formed, allowing the aggregation of meaningful attributions.
    \textbf{c) Off-Manifold Update:} In contrast, unconstrained updates traverse low-density regions (blue), triggering erratic gradients even far from the input, resulting in the aggregation of noisy and dense attributions.
    }
    \label{fig:main}
\end{figure*}

Deep neural networks have achieved remarkable success across high-stakes domains, ranging from medical diagnosis \citep{chen2021deep} to autonomous navigation \citep{lee2023adaptive,lee2025scots}. However, their black-box nature often obscures the rationale behind individual predictions, precluding reliable deployment when decisions require transparent and auditable reasoning. Feature attribution methods \cite{simonyan2014saliency, lundberg2017unified, montavon2019layer} address this need by assigning an importance score to each input feature, enabling debugging, auditing, and error analysis~\citep{de2024mitigating,bilodeau2024impossibility,anders2022finding,adebayo2020debugging,schramowski2020making,rieger2020interpretations,ross2017right,lee2023refining, kim2023explainable}. Among many approaches, path-based methods~\citep{sundararajan2017axiomatic,jha2020enhanced,kapishnikov2021guided,pan2021explaining,zaher2024manifold,zhuo2024ig2} are particularly attractive because they connect attributions to principled requirements such as completeness and sensitivity \cite{sundararajan2017axiomatic}. As a result, these methods have become a standard approach for interpreting modern neural networks and for building explanation-based diagnostics.

Integrated Gradients (IG) \cite{sundararajan2017axiomatic} is a canonical path-based attribution method that computes feature contributions by integrating input gradients along a path from a baseline to the input. While IG inherits strong axiomatic guarantees, its practical reliability depends heavily on the integration path and the choice of baseline \citep{sturmfels2020visualizing, xu2020attribution, kapishnikov2021guided, jeon2023beyond, zhang2024path}. However, the straight-line path of IG can intersect with regions of high gradient or erratic fluctuations, causing spurious gradients to accumulate in the final attribution. To address this, Guided IG (GIG)~\citep{kapishnikov2021guided} generates an adaptive path that updates low-gradient-magnitude features first, effectively circumventing these noisy regions.


Despite this improvement, GIG operates directly in input space, where intermediate images along the path can still deviate substantially from the data manifold. To mitigate this, GIG utilizes straight-line \textit{anchors} to counteract excessive manifold drift; however, this mechanism reintroduces a fundamental reliance on the noisy straight-line trajectory it seeks to avoid. 
Alternatively, latent-space methods such as Enhanced IG (EIG)~\citep{jha2020enhanced} and Manifold IG (MIG)~\citep{zaher2024manifold} have been proposed to improve manifold alignment. While these approaches encourage paths through more plausible latent regions, they remain agnostic to the model's logit surface and often traverse noisy, high-curvature areas. 
Consequently, improving path construction without simultaneously controlling manifold alignment and suppressing gradient noise leaves a persistent failure mode for reliable feature attribution.

In this paper, we propose \emph{Manifold-Aligned Guided Integrated Gradients} (MA-GIG), a path-based attribution method that constructs guided integration paths in the latent space of a pre-trained variational autoencoder (VAE) \cite{kingma2013auto, higgins2017beta}. 
MA-GIG uses the decoder to bias intermediate states toward the learned data manifold, encouraging gradients to be evaluated on more plausible path samples. By reducing exposure to unsupported input-space interpolations, our approach mitigates the accumulation of spurious gradient noise. Consequently, MA-GIG yields attributions that better reflect the model's decision behavior.


Our main contributions are as follows:
\textbf{(1)} We identify manifold deviation along attribution paths as a concrete source of unreliable gradient integration, and motivate manifold-aligned path construction for faithful explanations.
\textbf{(2)} We introduce \emph{Manifold-Aligned Guided Integrated Gradients}, which leverages the latent space of generative models to bias the integration path toward the learned data manifold, so that gradients are computed on more plausible, in-distribution intermediate samples.
\textbf{(3)} We demonstrate through qualitative analyses and quantitative evaluations that MA-GIG produces more perceptually aligned and more reliable attribution maps than prior path-based attribution methods.

\section{Background}

\subsection{Problem Formulation}

We consider a classifier $f: \mathbb{R}^n \rightarrow [0,1]$ mapping input vector $x \in \mathbb{R}^n$ to a scalar target class probability score. Feature attribution quantifies the contribution of each feature $x_i$ to the prediction $f(x)$ relative to a baseline $x'$. The baseline $x'$ serves as a neutral reference, such as a zero vector representing absent information. An attribution method produces an importance map $\mathcal{A} \in \mathbb{R}^n$, where the magnitude and sign of $\mathcal{A}_i$ indicate the feature's influence on the prediction.



\subsection{Path Methods}

Path-based attribution methods define feature importance by accumulating gradients along a continuous curve $\gamma: [0, 1] \rightarrow \mathbb{R}^n$ connecting the baseline $x'$ to the input $x$. Formally, let $\gamma$ be a path function such that $\gamma(0) = x'$ and $\gamma(1) = x$. The attribution for the $i$-th feature is defined as the path integral of the gradient with respect to that feature:
\begin{equation}
\mathcal{A}_i(\gamma) = \int_{0}^{1} \frac{\partial f(\gamma(t))}{\partial \gamma_i(t)} \frac{\partial \gamma_i(t)}{\partial t} \, \mathrm{d}t.
\end{equation}
A key theoretical advantage of path methods is their satisfaction of desirable axiomatic properties, such as \emph{Completeness} \citep{sundararajan2017axiomatic}, which guarantees that the sum of attributions equals the difference in model output: $\sum_{i=1}^n \mathcal{A}_i(\gamma) = f(x) - f(x')$.

\paragraph{Integrated Gradients (IG).}
IG~\citep{sundararajan2017axiomatic} is a canonical path method that employs a straight-line path for integration, defined as $\gamma(t) = x' + t(x - x')$ for $t \in [0, 1]$. Substituting this parameterization into the general path integral yields the closed-form attribution:
\begin{equation}
\mathcal{A}^{\text{IG}}_i(x, x') = (x_i - x'_i) \times \int_{0}^{1} \frac{\partial f(x' + t(x - x'))}{\partial x_i} \, \mathrm{d}t.
\end{equation}
In practice, the integral is approximated using a Riemann sum by averaging gradients computed at $M$ discrete points uniformly spaced along the linear interpolation.

\paragraph{Guided Integrated Gradients (GIG).}
To mitigate the noise accumulated from high-variance regions, GIG~\citep{kapishnikov2021guided} constructs an adaptive path designed to minimize the accumulation of spurious gradients. The fundamental objective is to identify a path $\gamma^*$ connecting the baseline to the input that minimizes the total gradient magnitude along the trajectory:
\begin{equation}
\gamma^* = \operatorname*{argmin}_{\gamma \in \Gamma} \sum_{i=1}^{n} \int_{t=0}^1\left|\frac{\partial f(\gamma(t))}{\partial \gamma_i(t)}\frac{\partial \gamma_i(t)}{\partial t}\right| \, \mathrm{d}t,
\end{equation}
where $\Gamma$ denotes the set of valid paths from $x'$ to $x$. Since solving this global optimization is computationally intractable, GIG employs a greedy approximation that iteratively updates features with the smallest gradient magnitudes. By navigating along flatter regions of the model's prediction surface, the method mitigates noise accumulation and yields cleaner attributions than the straight-line path.

\subsection{The Manifold Hypothesis}
\label{subsec:manifold_hypothesis}

The reliability of gradient-based attribution depends on evaluating gradients within the support of the data distribution. We formalize this via the manifold hypothesis~\citep{dombrowski2023diffeomorphic, he2024manifold}, assuming that the data support $\mathcal{X}$ lies on a smooth $k$-dimensional submanifold $\mathcal{M} \subset \mathbb{R}^n$ ($k \ll n$). Under this assumption, each point $x \in \mathcal{M}$ has an associated tangent space $T_x\mathcal{M}$ that locally approximates the manifold and characterizes the directions of natural data variation. To preserve manifold alignment (to first order), a discrete update $\Delta x$ at $x$ must satisfy $\Delta x \in T_x\mathcal{M}$. Updates with components orthogonal to $T_x\mathcal{M}$ move the point off-manifold into regions where classifier behavior is undefined and gradients are unreliable. This geometric constraint motivates our manifold-aligned attribution paths. (See Appendix~\ref{app:geometric_background} for further geometric details.)





\section{Manifold-Aligned Guided IG}
\label{sec:method}


In this section, we propose \textbf{M}anifold-\textbf{A}ligned \textbf{G}uided \textbf{I}ntegrated \textbf{G}radients (\textbf{MA-GIG}), a framework that generates reliable feature attributions 
by biasing the integration path toward the learned data manifold. We first formally characterize 
the manifold deviation problem inherent to pixel-space guidance, 
then show how latent-space path construction mitigates it 
through the geometric properties of the decoder mapping.

\subsection{Manifold Deviation in Input-Space Guidance}
\label{subsec:manifold_deviation}

The reliability of feature attribution can degrade when the integration path $\gamma(t)$ leaves the support of the data manifold $\mathcal{M} \subset \mathbb{R}^n$ 
\cite{bordt2023manifold}. 
This motivates evaluating gradients on path samples that remain as plausible as possible under the data distribution. 
In this section, we demonstrate that existing methods operating in the ambient input space, specifically GIG, inherently suffer from \emph{manifold deviation}.

\paragraph{Formalization of GIG Updates.}
To mitigate the accumulation of noise from high-variance gradients, GIG employs a greedy selection strategy. At step $k$, with the current position $x^{(k)}$, the algorithm identifies a subset of features $S^{(k)}$ possessing low local gradient magnitudes:
\begin{equation}
S^{(k)} = \left\{ i \mid \left| \frac{\partial f(x^{(k)})}{\partial x_i} \right| \leq \tau^{(k)} \right\},
\end{equation}
where $\tau^{(k)}$ denotes an adaptive threshold. The algorithm subsequently updates these selected features toward the input $x$. We characterize this update as a translation vector $\Delta x^{(k)}$ defined by:
\begin{equation}
x^{(k+1)} = x^{(k)} + \Delta x^{(k)}, \quad \text{where } \Delta x^{(k)}_i = 
\begin{cases} 
\delta_i & \text{if } i \in S^{(k)} \\
0 & \text{otherwise}
\end{cases}.
\end{equation}
Here, $\delta_i$ represents the step size for feature $i$. A critical observation is that $\Delta x^{(k)}$ is sparse and strictly axis-aligned with respect to the canonical basis of $\mathbb{R}^n$.

\paragraph{Geometric Incompatibility.}
Manifold deviation stems from a geometric mismatch between the axis-aligned update $\Delta x^{(k)}$ and the local manifold curvature. We characterize the local geometry of $\mathcal{M}$ at $x^{(k)}$ via the tangent space $T_{x^{(k)}}\mathcal{M}$. To remain on the manifold (in a first-order approximation), the displacement vector must satisfy $\Delta x^{(k)} \in T_{x^{(k)}}\mathcal{M}$.

However, the tangent space of natural image data encodes complex, non-linear correlations between pixels (e.g., edges and textures). Consequently, $T_{x^{(k)}}\mathcal{M}$ is rarely aligned with the canonical coordinate axes. We decompose the update vector into tangential and orthogonal components:
\begin{equation}
\Delta x^{(k)} = \Delta x^{(k)}_{\parallel} + \Delta x^{(k)}_{\perp},
\end{equation}
where $\Delta x^{(k)}_{\parallel} = \text{proj}_{T_{x^{(k)}}\mathcal{M}}(\Delta x^{(k)})$. $\Delta x^{(k)}_{\perp}$ represents the \textit{manifold deviation error}. In input-space greedy algorithms, this error is non-zero almost everywhere. We formalize this observation in the following proposition.

\begin{restatable}[Off-Manifold Drift]{proposition}{PropGIGDeviation}
\label{prop:gig_deviation}
Let $\mathcal{M} \subset \mathbb{R}^n$ be a $C^2$ submanifold with positive reach $\tau > 0$. Consider a GIG update step $x^{(k+1)} = x^{(k)} + \Delta x^{(k)}$ starting from $x^{(k)} \in \mathcal{M}$. Let $\Delta x_{\perp}^{(k)} = \Delta x^{(k)} - \mathcal{P}_{T_{x^{(k)}}\mathcal{M}}(\Delta x^{(k)})$ be the component of the update orthogonal to the tangent space. If the step size is sufficiently small ($\|\Delta x^{(k)}\| \le \tau/2$) and the orthogonal component is dominant such that $\|\Delta x_{\perp}^{(k)}\| > \frac{1}{\tau} \|\Delta x^{(k)}\|^2$, then the updated point strictly leaves the manifold:
\begin{equation}
x^{(k+1)} \notin \mathcal{M}.
\end{equation}
\end{restatable}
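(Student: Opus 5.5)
The plan is to argue by contradiction using the classical reach inequality of Federer (Theorem 4.18 in \emph{Curvature Measures}), in the form also used by Niyogi, Smale and Weinberger. For a closed set $\mathcal{M}$ with reach $\tau>0$ and any two points $p,q\in\mathcal{M}$, it states that
\begin{equation*}
\operatorname{dist}\bigl(q-p,\,T_p\mathcal{M}\bigr)\;\le\;\frac{\|q-p\|^2}{2\tau}.
\end{equation*}
Suppose, contrary to the claim, that $x^{(k+1)}\in\mathcal{M}$. Apply the inequality with $p=x^{(k)}$ and $q=x^{(k+1)}$, so that $q-p=\Delta x^{(k)}$.

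Since $T_{x^{(k)}}\mathcal{M}$ is a linear subspace, the distance from $\Delta x^{(k)}$ to it is attained at the orthogonal projection. Hence
\begin{equation*}
\operatorname{dist}\bigl(\Delta x^{(k)},T_{x^{(k)}}\mathcal{M}\bigr)=\bigl\|\Delta x^{(k)}-\mathcal{P}_{T_{x^{(k)}}\mathcal{M}}(\Delta x^{(k)})\bigr\|=\|\Delta x^{(k)}_\perp\|.
\end{equation*}
Combining the two displays gives $\|\Delta x^{(k)}_\perp\|\le \|\Delta x^{(k)}\|^2/(2\tau)$. If $\Delta x^{(k)}\neq 0$, this is strictly less than $\|\Delta x^{(k)}\|^2/\tau$, contradicting the hypothesis $\|\Delta x^{(k)}_\perp\|>\|\Delta x^{(k)}\|^2/\tau$. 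If $\Delta x^{(k)}=0$, the hypothesis reads $0>0$ and cannot hold, so this case is vacuous. Therefore $x^{(k+1)}\notin\mathcal{M}$. The hypothesis carries a factor-of-two margin over the sharp constant, so the argument has slack.

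The main obstacle is justifying the reach inequality in exactly the form needed and checking that the hypotheses match. I would include a short self-contained derivation. Take the unit normal $\nu$ along $\Delta x^{(k)}_\perp$ at $p$. Positive reach implies that the open ball of radius $\tau$ centred at $p+\tau\nu$ contains no point of $\mathcal{M}$; otherwise that centre would have a nearer point of $\mathcal{M}$ than $p$, contradicting unique projection within the reach. The condition $q\notin B(p+\tau\nu,\tau)$ expands to $\|q-p\|^2\ge 2\tau\langle q-p,\nu\rangle=2\tau\|\Delta x^{(k)}_\perp\|$, which is the inequality.

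Some textbook statements of this inequality are local, holding only for $\|q-p\|$ below a multiple of $\tau$. That is precisely where the step-size hypothesis $\|\Delta x^{(k)}\|\le\tau/2$ enters, keeping us inside the regime where any version of the lemma applies. The $C^2$ assumption is used only to guarantee that $T_{x^{(k)}}\mathcal{M}$ is a well-defined linear subspace, so that the projection in the statement makes sense.
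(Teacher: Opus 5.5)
Your proof is correct and is essentially the paper's argument: assume $x^{(k+1)}\in\mathcal{M}$, invoke Federer's reach inequality $d(\Delta x^{(k)},T_{x^{(k)}}\mathcal{M})\le \|\Delta x^{(k)}\|^2/(2\tau)$, identify that distance with $\|\Delta x_\perp^{(k)}\|$, and use the factor-of-two margin against the hypothesis to reach a contradiction. Your handling of $\Delta x^{(k)}=0$ is a small extra the paper skips; one caveat concerns your optional self-contained derivation, where emptiness of the tangent ball $B(p+\tau\nu,\tau)$ follows not from uniqueness of projection alone but from Federer's result that $p+r\nu$ projects back to $p$ for every unit normal $\nu$ and every $r<\tau$ (then take the union of the balls $B(p+r\nu,r)$), so that step is best left as a citation, as the paper does.
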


\begin{sproof}
This result derives from the geometric definition of the reach $\tau$. For a manifold with reach $\tau$, the distance of any point $y \in \mathcal{M}$ from the tangent space $T_x\mathcal{M}$ is bounded by the curvature, specifically $\|\Delta x_{\perp}\| \le \frac{1}{2\tau}\|\Delta x\|^2$ \citep{federer1959curvature}. Since the GIG update $\Delta x$ is axis-aligned while $T_x\mathcal{M}$ is generally not, the orthogonal component $\|\Delta x_{\perp}\|$ is first-order ($O(\|\Delta x\|)$), whereas the manifold's curvature tolerance is second-order ($O(\|\Delta x\|^2)$). Therefore, for sufficiently small step sizes where the linear term dominates, the condition $\|\Delta x_{\perp}\| > \frac{1}{\tau}\|\Delta x\|^2$ is satisfied, implying $x^{(k+1)}$ strictly leaves the manifold. See Appendix~\ref{app:proofs} for the complete derivation.
\end{sproof}

\paragraph{Error Accumulation.}
The orthogonal component $\Delta x_{\perp}^{(k)}$ constitutes an instantaneous drift from the manifold. Over a path of length $K$, these errors accumulate:
\begin{equation}
d(x^{(K)}, \mathcal{M}) \leq \sum_{k=0}^{K-1} \|\Delta x^{(k)}_{\perp}\| + O(\kappa),
\end{equation}
where $d(\cdot, \mathcal{M})$ is the distance to the manifold and $\kappa$ represents higher-order curvature terms. Given that GIG typically requires $K = O(1/q)$ steps for a selection fraction $q$, the cumulative deviation becomes significant. 
This can lead to gradient evaluation in poorly supported regions of the input space, motivating our proposal to perform guidance within a latent space decoded toward the learned data manifold $\mathcal{M}$.


\subsection{Manifold Alignment via Latent Guidance}
\label{subsec:magig_preservation}

\def\NoNumber#1{\STATE \textcolor{gray}{#1}}

\begin{figure}[t]
\vspace{-0.3cm}
\begin{minipage}{\linewidth}
\begin{algorithm}[H]
    \caption{\textbf{Manifold-Aligned Guided IG}}
    \label{alg:magig}
    \begin{algorithmic}[1]
    \STATE \textbf{Input:} Image $x$, baseline $x'$, classifier $f$, encoder $E$, decoder $D$, steps $K$, selection fraction $q$, step size $\eta \in (0,1]$
    \STATE \textbf{Output:} Attribution map $\mathcal{A}$
    
    \NoNumber{\small{\color{gray}\texttt{// encode to latent space}}}
    \STATE $z \gets E(x)$, \quad $z' \gets E(x')$
    \STATE $z^{(0)} \gets z'$
    
    \FOR{$k = 0, \ldots, K-2$}
        \NoNumber{\small{\color{gray}\texttt{// decode to input space}}}
        \STATE $\hat{x}^{(k)} \gets D(z^{(k)})$
        \NoNumber{\small{\color{gray}\texttt{// compute latent gradients}}}
        \STATE $g^{(k)} \gets J_D(z^{(k)})^\top \nabla_x f(\hat{x}^{(k)})$
        \NoNumber{\small{\color{gray}\texttt{// select low-gradient dimensions}}}
        \STATE $\tau^{(k)} \gets q\text{-quantile of } |g^{(k)}|$
        \STATE $S^{(k)} \gets \{j : |g^{(k)}_j| \leq \tau^{(k)}\}$
        \NoNumber{\small{\color{gray}\texttt{// update selected dims toward target}}}
        \STATE $z^{(k+1)}_j \gets z^{(k)}_j + \eta \, (z_j - z^{(k)}_j) \cdot \mathbbm{1}_{\{j \in S^{(k)}\}}$ for all $j$
    \ENDFOR
    
    \NoNumber{\small{\color{gray}\texttt{// compute path integral attributions}}}
    \STATE $z^{(K)} \gets z$
    \STATE $\tilde{x}^{(0)} \gets x'$, \quad $\tilde{x}^{(K)} \gets x$
    \STATE $\tilde{x}^{(k)} \gets D(z^{(k)})$ for $k=1,\ldots,K-1$
    \STATE $\mathcal{A}_i \gets \sum_{k=0}^{K-1} \frac{\partial f(\tilde{x}^{(k)})}{\partial x_i} \cdot (\tilde{x}^{(k+1)}_i - \tilde{x}^{(k)}_i)$ \quad for all $i$
    
    \STATE \textbf{return} $\mathcal{A}$
    \end{algorithmic}
\end{algorithm}
\end{minipage}
\end{figure}

To resolve the manifold deviation problem identified in Proposition~\ref{prop:gig_deviation}, we propose constructing the guidance path within the latent space of a generative model. 
We now show that, under idealized regularity conditions on the generative mapping, this approach maps latent paths to the generator-induced manifold; in practice, it encourages more plausible intermediate samples.

\paragraph{Generative Manifold Assumption.}
Let $D: \mathcal{Z} \to \mathcal{X}$ be a decoder of a generative model (e.g., VAE) mapping from a low-dimensional latent space $\mathcal{Z} \cong \mathbb{R}^d$ to the ambient data space $\mathcal{X} \cong \mathbb{R}^n$, where $d \ll n$. We assume $D$ acts as a parametrization of the data manifold $\mathcal{M}$. Following established frameworks in generative manifold learning \citep{dombrowski2023diffeomorphic, he2024manifold}, we adopt the \emph{Perfect Autoencoder} assumption for rigorous geometric analysis.


\begin{assumption}[Perfect Autoencoder]
\label{assump:perfect_ae}
Let $\mathcal{X} \subset \mathbb{R}^n$ be the ambient space containing the data manifold $\mathcal{M} \subset \mathcal{X}$. We assume the existence of a latent space $\mathcal{Z} \cong \mathbb{R}^d$ ($d \ll n$), an encoder $E: \mathcal{X} \to \mathcal{Z}$, and a smooth decoder $D: \mathcal{Z} \to \mathcal{X}$ that satisfy three key properties. First, the model achieves exact \textit{Reconstruction} for any point on the manifold, such that $D(E(x)) = x$ for all $x \in \mathcal{M}$. Second, the decoder satisfies \textit{Surjectivity} onto the data manifold, ensuring its image coincides with the manifold ($D(\mathcal{Z}) = \mathcal{M}$). Finally, the decoder is a \textit{Smooth Immersion}, meaning its Jacobian $J_D(z) \in \mathbb{R}^{n \times d}$ has full column rank for all $z \in \mathcal{Z}$.
\end{assumption}

Under this assumption, the image of the latent space coincides with the data manifold, i.e., $D(\mathcal{Z}) = \mathcal{M}$. A crucial geometric consequence is that the Jacobian $J_D(z)$ maps vectors from the latent space $\mathbb{R}^d$ directly onto the tangent space of the manifold at $x = D(z)$. Formally, the range of the Jacobian satisfies:
\begin{equation}
\label{eq:tangent_map}
\text{Im}(J_D(z)) = T_{D(z)}\mathcal{M},
\end{equation}
where $\text{Im}$ denotes the image (or column space) of the matrix. 


\paragraph{Manifold Alignment of Latent Updates.}
In MA-GIG, we construct the integration path $\gamma_z(t)$ in the latent space $\mathcal{Z}$ by adapting the GIG strategy to the latent representations. Let $z^{(k)}$ be the current position in latent space. We first compute the gradient of the model output with respect to the latent features via the decoder's Jacobian:
\begin{equation}
\nabla_z f(D(z^{(k)})) = J_D(z^{(k)})^\top \nabla_x f(D(z^{(k)})).
\end{equation}

Then, we select a subset of latent dimensions $S_z^{(k)}$ with low-gradient-magnitudes to minimize noise accumulation:
\begin{equation}
S_z^{(k)} = \left\{ j \mid \left| \frac{\partial f(D(z^{(k)}))}{\partial z_j} \right| \le \tau_z^{(k)} \right\}.
\end{equation}

The latent update vector $\Delta z^{(k)}$ is then defined as a sparse, axis-aligned vector updating only the selected dimensions:
\begin{equation}
\Delta z^{(k)} = \sum_{j \in S_z^{(k)}} \delta_j u_j,
\end{equation}
where $\{u_j\}$ are the standard basis vectors of $\mathcal{Z}$ and $\delta_j$ is the step size. Crucially, although $\Delta z^{(k)}$ is axis-aligned in $\mathcal{Z}$, its induced update in the data space, $\Delta x^{(k)} \approx J_D(z^{(k)}) \Delta z^{(k)}$, is \emph{not} axis-aligned in $\mathbb{R}^n$. Instead, it acts as a linear combination of the Jacobian's column vectors. The complete pseudocode for this procedure is provided in Algorithm~\ref{alg:magig}.

Under Assumption~\ref{assump:perfect_ae}, since the decoder is a smooth immersion with $D(\mathcal{Z}) = \mathcal{M}$, any continuous path in the latent space maps strictly to the data manifold. Furthermore, the first-order approximation of the induced data-space update lies exactly in the tangent space:
\begin{equation}
\Delta x^{(k)} \approx J_D(z^{(k)}) \Delta z^{(k)} \in T_{D(z^{(k)})}\mathcal{M}.
\end{equation}
This result highlights the fundamental geometric advantage of MA-GIG. Recall from Proposition~\ref{prop:gig_deviation} that a sparse update in input space, $\Delta x_{\text{GIG}} = \sum \delta_i e_i$, fails to lie in $T_x\mathcal{M}$ because the canonical coordinate basis $\{e_i\}$ is generically misaligned with the manifold's curvature.

In contrast, consider a sparse update in MA-GIG, $\Delta z = \delta_j u_j$, where $u_j$ is a standard basis vector in $\mathcal{Z}$. The induced update in data space becomes:
\begin{equation}
\Delta x_{\text{MA}} \approx J_D(z) (\delta_j u_j) = \delta_j \cdot \frac{\partial D}{\partial z_j}(z).
\end{equation}
Here, the vector $\frac{\partial D}{\partial z_j}$ is precisely the $j$-th column of the Jacobian. By the smooth immersion property, this vector is by definition a tangent vector to $\mathcal{M}$. 
Thus, the decoder $D$ maps axis-aligned latent updates into correlated input-space directions that locally follow the generator-induced tangent structure and encourage global path plausibility.

\paragraph{Practical Remark on Imperfect Generators.}
While our analysis relies on the \emph{Perfect Autoencoder} assumption, in practice, we find that well-trained imperfect VAEs
, such as those in Latent Diffusion Models~\citep{rombach2022high}, 
still provide useful generative priors for latent guidance. This suggests that the decoder geometry can improve path plausibility even without exact reconstruction or a strict guarantee of true-manifold membership.

\section{Experiments}
\label{sec:experiments}

In this section, we empirically validate the effectiveness of MA-GIG. We investigate \textbf{(1)} whether MA-GIG produces quantitatively and qualitatively more perceptually aligned attributions, \textbf{(2)} whether the latent-space path construction effectively maintains manifold alignment, and \textbf{(3)} how robust the method is across different generative backbones and hyperparameter choices. Further implementation details and extended analyses are provided in Appendices~\ref{app:impl_detail} and~\ref{app:add_results}.

\subsection{Experimental Setup}

\paragraph{Datasets and Models.}

We evaluate MA-GIG on three diverse image classification benchmarks: (1) \textbf{ImageNet}~\citep{deng2009imagenet}, the standard large-scale dataset comprising 1,000 object categories; (2) \textbf{Oxford-IIIT Pet}~\citep{parkhi2012cats}, featuring 37 pet categories, selected to assess fine-grained classification performance where subtle features distinguish similar classes; and (3) \textbf{Oxford 102 Flower}~\citep{nilsback2008automated}, containing 102 flower categories with significant variations in scale, pose, and background clutter.

For the classifier backbones, we employ VGG16~\citep{simonyan2015very}, ResNet18~\citep{he2016deep}, and InceptionV1~\citep{szegedy2015going}. All models are initialized with ImageNet-pretrained weights, with those evaluated on the Oxford datasets subsequently fine-tuned on their respective training splits following standard protocols.

\paragraph{Baselines.}
To demonstrate the effectiveness of MA-GIG, we compare it against a comprehensive set of attribution methods. We include G$\times$I~\citep{shrikumar2016not} as a representative gradient-based baseline. For path-based approaches, we evaluate the foundational IG~\citep{sundararajan2017axiomatic} and distinct path construction strategies: adaptive methods designed to mitigate noise or saturation, including IG$^2$~\citep{zhuo2024ig2} which seeks steeper gradients, GIG~\citep{kapishnikov2021guided}, and AGI~\citep{pan2021explaining} which integrates along steepest ascent paths from adversarial examples; and manifold-informed latent approaches such as EIG~\citep{jha2020enhanced} and MIG~\citep{zaher2024manifold}, which employ linear interpolation and geodesic paths within the VAE latent space, respectively. 

\begin{figure*}[ht]
    \centering
    \includegraphics[width=\linewidth]{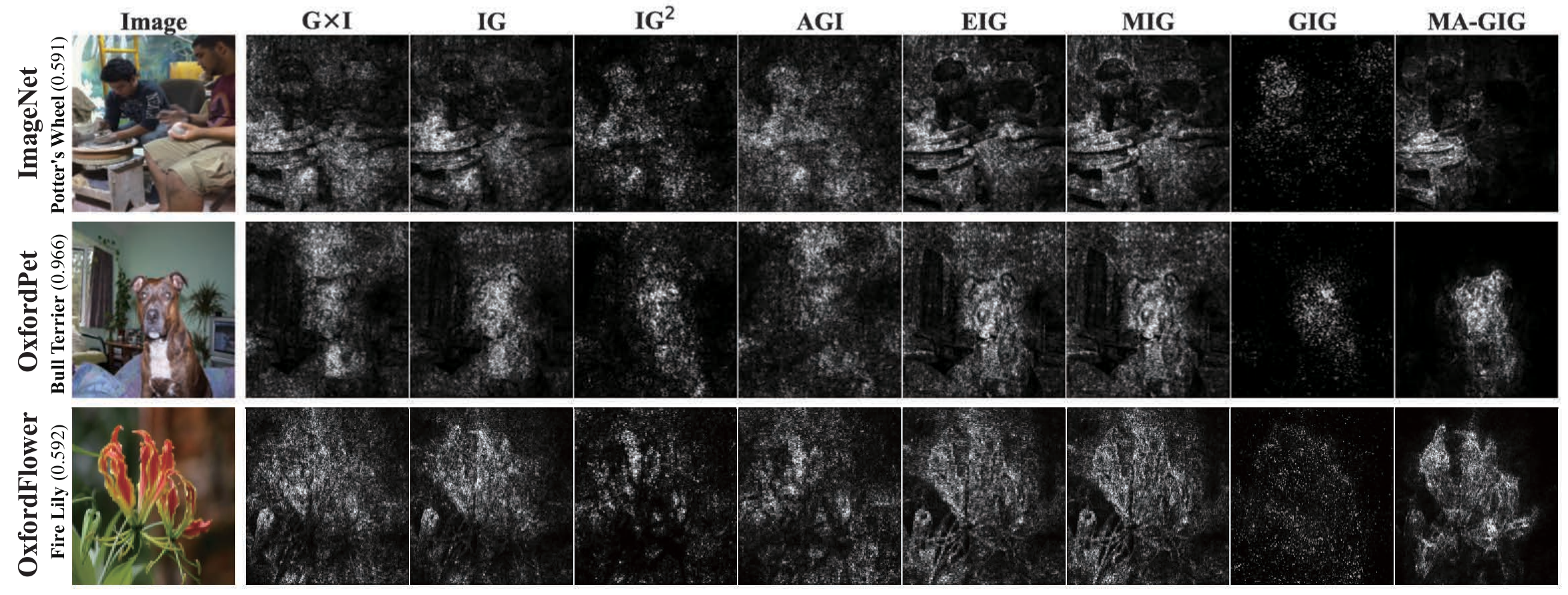}
    \caption{\textbf{Qualitative comparison of attribution maps on ImageNet (InceptionV1), Oxford-IIIT Pet (ResNet18), and Oxford 102 Flower (VGG16) against baselines.} Labels indicate predicted classes, and numbers in brackets denote prediction confidence.}
    \label{fig:qual}
\end{figure*}

\paragraph{Evaluation Metrics.}
To assess attribution faithfulness, we employ the \textbf{DiffID} score~\citep{yang2023local}.
We compute \textbf{Insertion} and \textbf{Deletion} scores~\citep{petsiuk2018rise} as the Area Under the Curve (AUC) of model confidence as pixels are progressively revealed or removed by importance. Unlike standard perturbation benchmarks prone to distribution shift~\citep{hooker2019benchmark}, DiffID mitigates biases by measuring the relative divergence between these curves rather than absolute values. This formulation neutralizes shared distributional errors, providing a reliable evaluation of feature saliency. Formal definitions are in Appendix~\ref{app:diffid_metric}.

\subsection{Qualitative Comparison}
\label{fig:qual_comp}

As shown in \Cref{fig:qual}, baselines generally produce widely dispersed attributions with significant background noise. In contrast, MA-GIG demonstrates superior visual fidelity by effectively suppressing noise and concentrating attributions on class-relevant regions. These results confirm that MA-GIG provides more faithful explanations than existing approaches. (See Appendix~\ref{app:qual} for more).

\subsection{Quantitative Evaluation}

We present the quantitative comparison of our proposed method against various baselines in \Cref{tab:attribution_comparison}, focusing on the MAR backbone. Extended evaluations with alternative VAE backbones are detailed in Appendix~\ref{app:quant}. The results demonstrate that MA-GIG consistently outperforms standard attribution methods such as IG and GIG across all evaluated benchmarks. Notably, our method achieves superior faithfulness scores in terms of DiffID and Insertion metrics on Oxford-IIIT Pet, Oxford 102 Flower, and ImageNet datasets. Furthermore, the additional results in the Appendix confirm that this performance advantage is robust across different generative priors, significantly enhancing the reliability of explanations compared to existing input-space approaches.

\begin{table*}[t]
\centering
\caption{Performance comparison of attribution methods among classifiers across datasets. \textbf{Best} and \underline{second-best} results are highlighted.}
\label{tab:attribution_comparison}
\resizebox{\textwidth}{!}{%
\begin{tabular}{llrrrrrrrrr}
\toprule
& & \multicolumn{3}{c}{\textbf{ResNet18}} & \multicolumn{3}{c}{\textbf{VGG16}} & \multicolumn{3}{c}{\textbf{InceptionV1}} \\
\cmidrule(lr){3-5} \cmidrule(lr){6-8} \cmidrule(lr){9-11}
\textbf{Data} & \textbf{Method} & \textbf{DiffID ($\uparrow$)} & \textbf{Ins ($\uparrow$)} & \textbf{Del ($\downarrow$)} & \textbf{DiffID ($\uparrow$)} & \textbf{Ins ($\uparrow$)} & \textbf{Del ($\downarrow$)} & \textbf{DiffID ($\uparrow$)} & \textbf{Ins ($\uparrow$)} & \textbf{Del ($\downarrow$)} \\
\midrule
& G $\times$ I \cite{shrikumar2016not} & 0.2384 & 0.4378 & 0.1994 & 0.4060 & 0.5174 & 0.1114 & 0.2255 & 0.3940 & 0.1685 \\
\rowcolor{palegray}
\cellcolor{white} & IG \cite{sundararajan2017axiomatic} & 0.3790 & 0.5186 & 0.1396 & 0.5255 & 0.6057 & 0.0802 & 0.3438 & 0.4748 & 0.1309 \\
\cellcolor{white} & IG$^2$ \cite{zhuo2024ig2} & \underline{0.3823} & \underline{0.5264} & 0.1441 & \textbf{0.6075} & \textbf{0.6829} & \underline{0.0754} & \underline{0.4273} & \textbf{0.5315} & \underline{0.1042} \\
\rowcolor{palegray}
\cellcolor{white} & AGI \cite{pan2021explaining} & 0.2787 & 0.4453 & 0.1667 & 0.4471 & 0.5369 & 0.0898 & 0.3381 & 0.4589 & 0.1207 \\
\cellcolor{white} & EIG \cite{jha2020enhanced} & 0.3595 & 0.4964 & \underline{0.1369} & 0.4949 & 0.5796 & 0.0847 & 0.3306 & 0.4658 & 0.1351 \\
\rowcolor{palegray}
\cellcolor{white} & MIG \cite{zaher2024manifold} & 0.3486 & 0.4889 & 0.1402 & 0.4850 & 0.5664 & 0.0814 & 0.3180 & 0.4619 & 0.1438 \\
\cellcolor{white} & GIG \cite{kapishnikov2021guided} & 0.3634 & 0.5093 & 0.1459 & 0.5556 & 0.5889 & \textbf{0.0333} & 0.3586 & 0.4880 & 0.1294 \\
\cmidrule(lr){2-11}
\rowcolor{lightgray}
\multirow{-9}{*}{\rotatebox[origin=c]{90}{\textbf{Oxford-IIIT Pet}}} \cellcolor{white} & MA-GIG & \textbf{0.4637} & \textbf{0.5886} & \textbf{0.1249} & \underline{0.5913} & \underline{0.6730} & 0.0817 & \textbf{0.4309} & \underline{0.5282} & \textbf{0.0973} \\
\midrule
\cellcolor{white} & G $\times$ I \cite{shrikumar2016not} & 0.1222 & 0.2338 & 0.1116 & 0.2784 & 0.3576 & 0.0791 & 0.2000 & 0.2813 & 0.0813 \\
\rowcolor{palegray}
\cellcolor{white} & IG \cite{sundararajan2017axiomatic} & 0.1769 & \underline{0.2740} & 0.0971 & \underline{0.3184} & \underline{0.3851} & \textbf{0.0667} & \underline{0.2551} & 0.3247 & \textbf{0.0696} \\
\cellcolor{white} & IG$^2$ \cite{zhuo2024ig2} & 0.0193 & 0.1713 & 0.1520 & 0.2224 & 0.3304 & 0.1080 & 0.0816 & 0.2053 & 0.1238 \\
\rowcolor{palegray}
\cellcolor{white} & AGI \cite{pan2021explaining} & 0.0136 & 0.1569 & 0.1433 & 0.1402 & 0.2682 & 0.1280 & 0.0787 & 0.1947 & 0.1160 \\
\cellcolor{white} & EIG \cite{jha2020enhanced} & 0.1696 & 0.2687 & 0.0991 & 0.3084 & 0.3758 & \underline{0.0673} & 0.2507 & 0.3240 & 0.0733 \\
\rowcolor{palegray}
\cellcolor{white} & MIG \cite{zaher2024manifold} & 0.1671 & 0.2707 & 0.1036 & 0.3073 & 0.3753 & 0.0680 & 0.2458 & 0.3202 & 0.0744 \\
\cellcolor{white} & GIG \cite{kapishnikov2021guided} & \underline{0.1891} & 0.2720 & \textbf{0.0829} & 0.2542 & 0.3424 & 0.0882 & \underline{0.2551} & \underline{0.3282} & \underline{0.0731} \\
\cmidrule(lr){2-11}
\rowcolor{lightgray}
\multirow{-9}{*}{\rotatebox[origin=c]{90}{\textbf{Oxford 102 Flower}}} \cellcolor{white} & MA-GIG & \textbf{0.2389} & \textbf{0.3333} & \underline{0.0944} & \textbf{0.3458} & \textbf{0.4222} & 0.0764 & \textbf{0.3131} & \textbf{0.3864} & 0.0733 \\
\midrule
\cellcolor{white} & G $\times$ I \cite{shrikumar2016not} & 0.1038 & 0.2278 & 0.1240 & 0.1882 & 0.2749 & 0.0867 & 0.1380 & 0.2776 & 0.1396 \\
\rowcolor{palegray}
\cellcolor{white} & IG \cite{sundararajan2017axiomatic} & 0.1358 & 0.2507 & 0.1149 & 0.2469 & 0.3222 & 0.0753 & 0.2020 & 0.3138 & 0.1118 \\
\cellcolor{white} & IG$^2$ \cite{zhuo2024ig2} & 0.1824 & 0.2676 & 0.0851 & \underline{0.3044} & \underline{0.3582} & \underline{0.0538} & 0.2569 & 0.3447 & \underline{0.0878} \\
\rowcolor{palegray}
\cellcolor{white} & AGI \cite{pan2021explaining} & 0.1447 & 0.2507 & 0.1060 & 0.2242 & 0.3040 & 0.0798 & 0.2067 & 0.2993 & 0.0927 \\
\cellcolor{white} & EIG \cite{jha2020enhanced} & 0.1327 & 0.2467 & 0.1140 & 0.2436 & 0.3222 & 0.0787 & 0.1942 & 0.3013 & 0.1071 \\
\rowcolor{palegray}
\cellcolor{white} & MIG \cite{zaher2024manifold} & 0.1522 & 0.2616 & 0.1093 & 0.2173 & 0.2898 & 0.0724 & 0.1600 & 0.2849 & 0.1249 \\
\cellcolor{white} & GIG \cite{kapishnikov2021guided} & \underline{0.2536} & \underline{0.3304} & \underline{0.0769} & 0.2842 & 0.3367 & \textbf{0.0524} & \underline{0.2771} & \underline{0.3667} & 0.0896 \\
\cmidrule(lr){2-11}
\rowcolor{lightgray}
\multirow{-9}{*}{\rotatebox[origin=c]{90}{\textbf{ImageNet2012}}} \cellcolor{white} & MA-GIG & \textbf{0.2707} & \textbf{0.3409} & \textbf{0.0702} & \textbf{0.3262} & \textbf{0.3809} & 0.0547 & \textbf{0.3280} & \textbf{0.4087} & \textbf{0.0807} \\
\bottomrule
\end{tabular}
}
\vspace{-0.1cm}
\end{table*}

\subsection{Manifold Alignment Analysis}
\label{sec:manifold_analysis}

To probe the perceptual plausibility of intermediate path samples, we measure the LPIPS distance~\citep{zhang2018unreasonable} from each $\gamma(\alpha)$ to the input. LPIPS is not a direct manifold metric, but provides a useful proxy for perceptual deviation along the path, since large feature-space changes often indicate visually implausible intermediate states.

\begin{figure}[!t]
    \centering
    \includegraphics[width=.95\linewidth]{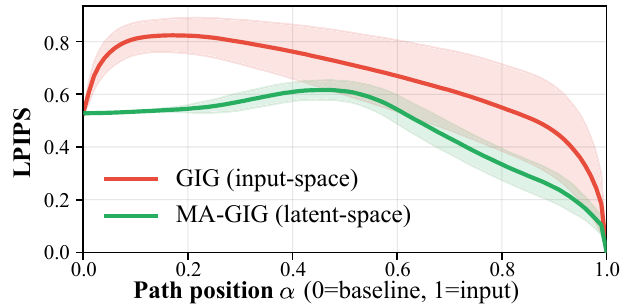}
    \caption{\textbf{LPIPS-based manifold alignment path analysis on ImageNet (ResNet18).} The LPIPS distance from each intermediate sample $\gamma(\alpha)$ to the input.}
    \label{fig:manifold_analysis}
\end{figure}

\cref{fig:manifold_analysis} contrasts the LPIPS distance profiles of GIG and MA-GIG on ImageNet using ResNet18, averaged over 100 samples. GIG exhibits higher perceptual deviation during the intermediate phase, consistent with visually unstable input-space updates. In contrast, MA-GIG maintains lower LPIPS scores throughout the trajectory, suggesting that decoded latent paths produce more perceptually coherent intermediate samples. Additional results can be found in Appendix~\ref{app:manifold}.




\begin{figure}[ht]
    \centering
    \begin{subfigure}[b]{\linewidth}
        \centering
        \includegraphics[width=1.0\linewidth]{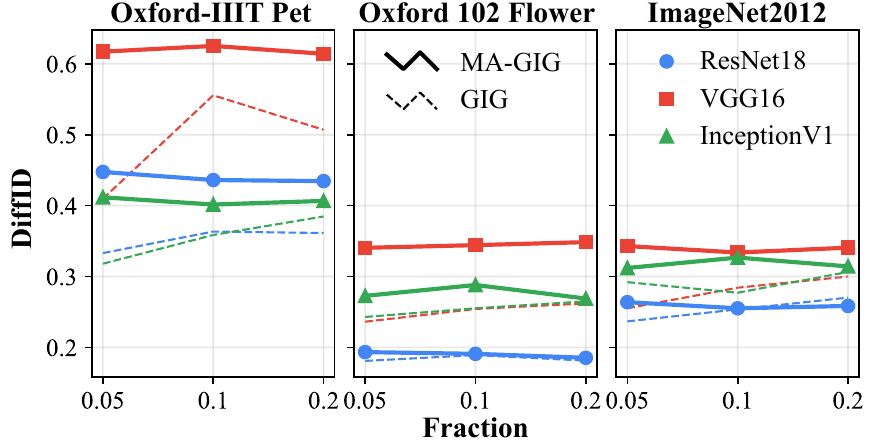}
        \caption{Effect of selection fraction}
        \label{fig:hyper_fraction}
    \end{subfigure}
    \begin{subfigure}[b]{\linewidth}
        \centering
        \includegraphics[width=\linewidth]{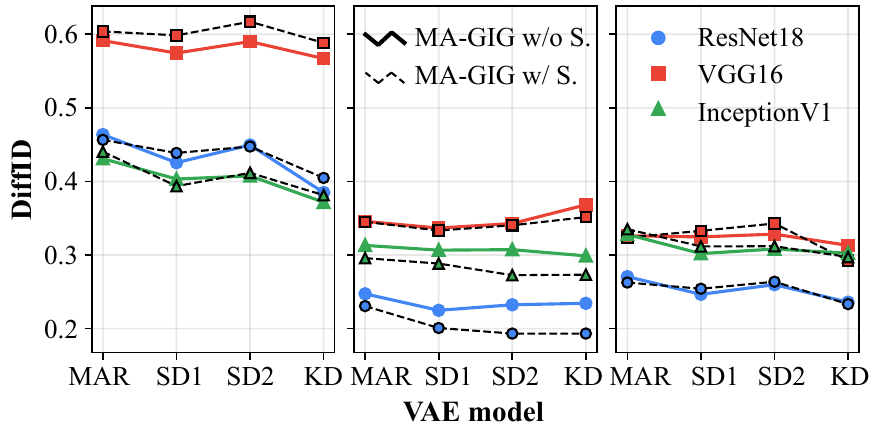}
        \caption{Performance across VAE models}
        \label{fig:hyper_vae}
    \end{subfigure}
    \caption{\textbf{Hyperparameter sensitivity and ablation analysis.} (a) The model performance remains consistent across varying feature selection fractions. (b) We compare different pre-trained VAE backbones and the effect of Spherical Linear Interpolation (Slerp). Slerp (dashed lines) shows mixed changes relative to linear interpolation (solid lines), with no consistent DiffID gain. We therefore use linear interpolation as the default path for simplicity. (S.: Slerp)}
    \label{fig:hyperparameter_search}
\end{figure}

\subsection{Effect of Selection Fraction}
We evaluate the impact of gradient guidance sparsity by varying the selection fraction $q \in \{0.05, 0.1, 0.2\}$. As shown in \Cref{fig:hyperparameter_search}(a), DiffID scores remain remarkably stable across a wide range of selection fractions $q$. Crucially, MA-GIG consistently achieves higher DiffID scores compared to the GIG baseline across all evaluated datasets and classifiers. Notably, a lower fraction of $0.05$ frequently achieves optimal performance. This stability demonstrates that our method is robust to hyperparameter choices, consistently delivering reliable attributions without requiring careful tuning of the selection fraction.
\subsection{VAE Ablation Studies}

To evaluate the generalizability of MA-GIG and its sensitivity to the generative prior, we conduct experiments using various pre-trained VAEs: MAR~\citep{li2024autoregressive}, Stable Diffusion v1-1 (SD1)~\citep{rombach2022high}, Stable Diffusion v2-1 (SD2)~\citep{rombach2022high}, and Kandinsky 2.1 (KD)~\citep{razzhigaev2023kandinsky}.



\cref{fig:hyperparameter_search}(b) demonstrates that the VAEs from MAR and SD2 generally deliver robust performance across benchmarks. Notably, the VAE from MAR, which is pre-trained specifically on ImageNet, often matches or exceeds the performance of VAEs from larger models such as SD2, despite the latter being trained on the significantly larger LAION-5B dataset~\citep{schuhmann2022laion}. This suggests that MA-GIG depends not only on generative model scale, but also on domain alignment between the VAE prior and the target data. When this alignment is poor, the decoded path may provide a weaker approximation to the data manifold and can reduce attribution quality.

\paragraph{Decoder reconstruction and attribution quality.}
We further examine whether reconstruction fidelity alone explains attribution quality. Table~\ref{tab:vae_recon_diffid} reports reconstruction MSE and DiffID on ImageNet with ResNet18 under the VAE-sweep setting. Although all VAEs have low reconstruction error, lower MSE does not necessarily imply higher attribution quality. For example, KD achieves the lowest MSE but the lowest DiffID, while MAR and SD2 yield stronger attribution scores. Across the full dataset--classifier sweep, reconstruction MSE shows only a moderate correlation with DiffID (average Pearson $r=0.406$) and Insertion AUC ($r=0.530$), suggesting that domain alignment and task-relevant latent structure matter beyond pixel-level reconstruction fidelity.

\begin{table}[ht]
    \centering
    \setlength{\tabcolsep}{3pt}
    \caption{\textbf{VAE reconstruction quality and attribution performance on ImageNet/ResNet18.}}
    \label{tab:vae_recon_diffid}
    \begin{tabular}{lcc}
        \toprule
        \textbf{VAE} & \textbf{Recon. MSE ($\downarrow$)} & \textbf{DiffID ($\uparrow$)} \\
        \midrule
        MAR \cite{li2024autoregressive} & 0.00337 & 0.2627 \\
        SD1 \cite{rombach2022high} & 0.00353 & 0.2542 \\
        SD2 \cite{rombach2022high} & 0.00275 & 0.2638 \\
        KD  \cite{razzhigaev2023kandinsky} & 0.00247 & 0.2333 \\
        \bottomrule
    \end{tabular}
\end{table}

\paragraph{Effect of Slerp.}
We investigate whether Spherical Linear Interpolation (Slerp)~\citep{shoemake1985animating}, a common alternative for latent interpolation under approximately spherical priors, improves performance. As shown in \cref{fig:hyperparameter_search}(b), Slerp yields mixed results across VAE, dataset, and classifier settings rather than a consistent improvement over standard linear interpolation. For example, it decreases the score from 0.4637 to 0.4565 on OxfordPet (ResNet18), while other settings show small gains. We therefore use linear interpolation as the default due to its simplicity and comparable performance. The specific numerical comparisons are detailed in \cref{tab:hyperparameter_search} of the Appendix.

\section{Related Work}
\label{sec:related_work}

\paragraph{Path-based Attribution Methods.}

IG~\cite{sundararajan2017axiomatic} introduced the path integral formulation for feature attribution, providing axiomatic guarantees such as completeness and sensitivity. However, the straight-line path in IG often accumulates noisy gradients along regions of high curvature on the model output surface, resulting in spurious attributions. To address this, GIG \cite{kapishnikov2021guided} adapts the integration path by greedily selecting features with the smallest gradient magnitudes, thereby avoiding high-gradient regions that contribute to noise. Blur IG~\cite{xu2020attribution} integrates along a scale-space path by progressively deblurring the input, reducing perturbation artifacts. IG$^2$~\cite{zhuo2024ig2} formulates path optimization to seek steeper gradients, avoiding saturation regions where gradient information is unreliable. SAMP~\cite{zhang2024path} formulates path selection as an optimization problem under a concentration principle, seeking paths that allocate attributions to the most salient features.

Most path-based attribution methods focus on designing more informative integration trajectories or perturbation schedules, but they rarely ask when the path itself becomes a source of attribution error. This distinction is important: completeness constrains the total attribution, but it does not prevent a path from accumulating semantically irrelevant gradients in off-manifold regions. In contrast, MA-GIG frames attribution degradation as a path-induced geometric mismatch and uses generative latent paths to reduce this mismatch.

Most relevant to our work are latent-space attribution methods. EIG employs \textit{linear} interpolation in the latent space, while MIG integrates gradients along a \textit{geodesic} shortest path. However, both methods remain largely agnostic to the model's logit surface and can still traverse noisy or high-curvature regions. In contrast, MA-GIG adaptively constructs a path that simultaneously adheres to the data manifold and avoids high-curvature regions of the prediction landscape, thereby combining geometric validity with noise suppression.


\paragraph{Mask-based and Localization-based Attribution Methods.}

Another line of work improves attribution quality through final-layer localization or explicit perturbation-mask optimization rather than path integration. Grad-CAM~\citep{selvaraju2017gradcam} localizes discriminative regions using class-specific gradients in the final convolutional layer, while I-GOS and iGOS++~\citep{qi2020visualizing,khorram2021igos++} optimize masks to identify input regions that most affect the model prediction. These methods often provide strong insertion performance, but they solve a different problem from path-based attribution: they do not characterize how gradients should be accumulated along an integration path, nor do they preserve the path-integral semantics of IG. MA-GIG is therefore complementary to these approaches, focusing specifically on reducing path-induced attribution error.

\paragraph{Manifold Constrained Optimization.}

The manifold hypothesis posits that high-dimensional data concentrates near a lower-dimensional manifold embedded in the ambient space~\citep{fefferman2016testing}. While Riemannian optimization~\citep{absil2009optimization,boumal2023introduction} offers a principled framework for explicitly known manifolds, it relies on closed-form operations such as retractions, which are intractable for manifolds implicitly defined by data distributions. 
To bridge this gap, recent works leverage generative priors to approximate manifold geometry~\citep{lee2025diverse}. 
Landing~\citep{kharitenko2025landing} introduces a link function connecting data distributions to Riemannian operations via diffusion score functions. Similarly, Manifold Preserving Guided Diffusion (MPGD)~\citep{he2024manifold} projects guidance gradients onto the tangent space using autoencoders to ensure conditional samples remain on-manifold. Diffeomorphic Counterfactuals~\citep{dombrowski2023diffeomorphic,lee2025counterfactual} leverages normalizing flows to perform gradient ascent in a coordinate system where the data manifold is better aligned, producing semantically meaningful counterfactuals. 
LoMAP~\citep{lee2025local} addresses manifold deviation in diffusion-based planning by projecting guided samples onto a locally approximated low-rank subspace derived from offline data. 

These works show that off-manifold deviation can degrade the semantic validity of generated samples, counterfactuals, or guided optimization. However, this issue has rarely been formulated as a source of error in path-integrated attribution. Aligning with this perspective, we formulate path-based attribution as a manifold-constrained problem. By operating in the VAE latent space, MA-GIG uses the decoder to project intermediate path points toward the image manifold, ensuring geometric fidelity while preserving the axiomatic structure of integrated gradients.

\section{Conclusion}
We proposed MA-GIG to resolve the persistent failure mode of off-manifold deviation in path-based feature attribution. By constructing integration paths within the latent space of generative models, our framework 
encourages gradients to be evaluated on more plausible intermediate samples aligned with the learned data manifold 
aligned with the data manifold. Our analysis demonstrates that this geometric constraint effectively suppresses off-manifold noise and enhances faithfulness by focusing on input-proximal semantic features. 
By unifying geometric fidelity with noise-robust guidance, MA-GIG provides a principled framework for reliable and transparent feature attribution essential for trustworthy AI.



\paragraph{Limitations and Future Work.}

MA-GIG addresses a specific source of error in integrated-gradient methods: the mismatch between mathematically complete path attribution and semantically meaningful attribution when the path traverses off-manifold regions. Its contribution is therefore a manifold-aligned path design principle, not a direct optimization of attribution masks or insertion scores. 
This scope also explains its empirical limitation. On ImageNet insertion, MA-GIG remains below methods such as Grad-CAM~\citep{selvaraju2017gradcam}, I-GOS, and iGOS++~\citep{qi2020visualizing,khorram2021igos++}, which rely on final-layer localization or explicitly optimize masks for the insertion objective. Future work should characterize when path-integrated attribution coincides with the desired semantic attribution, and when the gap arises. Such analysis could guide principled combinations with multi-path sampling, SmoothGrad-style averaging, or optimization-based path refinement. 

A further limitation is MA-GIG's dependence on the quality and domain alignment of the underlying VAE. If the decoder poorly reconstructs the target domain or maps latent trajectories through implausible intermediate states, the benefit of manifold alignment may weaken and attribution quality can degrade. Future work should therefore develop decoder-fidelity diagnostics and domain-adaptive generative priors for path-based attribution.

\section*{Acknowledgements}
This work was supported by the Institute for Information \& Communications Technology Planning \& Evaluation (IITP) grants funded by the Korean government (MSIT) (grant Nos. RS-2019-II190075, Artificial Intelligence Graduate School Program (KAIST); RS-2022-II220984, Development of Artificial Intelligence Technology for Personalized Plug-and-Play Explanation and Verification of Explanation; RS-2024-00457882, AI Research Hub Project), and by the InnoCORE program of the Ministry of Science and ICT (N10250156). This research was also supported by the AI Computing Infrastructure Enhancement (GPU Rental Support) User Support Program funded by MSIT (RQT-25-120227).


\section*{Impact Statement}

This paper improves the reliability of feature attribution by constraining integration paths to the data manifold, thereby producing more faithful explanations. While we identify no direct negative societal impacts, users should note that the method's reliability depends on the quality of the underlying generative model. We advise against using attributions as the sole basis for high-stakes decisions without rigorous context-specific validation.


\bibliography{example_paper}
\bibliographystyle{icml2026}

\newpage
\appendix
\onecolumn
\hypersetup{
    linkcolor=black, 
}
\section*{\huge Appendix}
\vspace{5mm}
\section*{\LARGE Table of Contents}
\vspace{-5mm}
\noindent\rule{\linewidth}{0.8pt}
\vspace{-7mm}
\begin{itemize}[label=, leftmargin=*]
    \item \hyperref[app:notation]{\textbf{Appendix A.} Notations \dotfill \pageref{app:notation}}
    \item \hyperref[app:geometric_background]{\textbf{Appendix B.} Geometric Background of Data Manifold \dotfill \pageref{app:geometric_background}}
    \item \hyperref[app:proofs]{\textbf{Appendix C.} Proofs \dotfill \pageref{app:proofs}}
    \item \hyperref[app:axioms]{\textbf{Appendix D.} Axiomatic Properties of MA-GIG \dotfill \pageref{app:axioms}}
    \begin{itemize}[label=, leftmargin=1.5em]
        \item \hyperref[app:axiomatic_proofs]{D.1. Axiomatic Proofs \dotfill \pageref{app:axiomatic_proofs}}
    \end{itemize}
    \item \hyperref[app:diffid_metric]{\textbf{Appendix E.} Evaluation Metrics \dotfill \pageref{app:diffid_metric}}
    \begin{itemize}[label=, leftmargin=1.5em]
        \item \hyperref[app:ins_del]{E.1. Insertion and Deletion Games \dotfill \pageref{app:ins_del}}
        \item \hyperref[app:diffid]{E.2. DiffID \dotfill  \pageref{app:diffid}}
    \end{itemize}
    \item \hyperref[app:impl_detail]{\textbf{Appendix F.} Implementation Details \dotfill \pageref{app:impl_detail}}
    \begin{itemize}[label=, leftmargin=1.5em]
        \item \hyperref[app:dataset_model]{F.1. Datasets and Models \dotfill \pageref{app:dataset_model}}
        \item \hyperref[app:vae]{F.2. Pre-Trained VAE Backbones for Data Manifold \dotfill \pageref{app:vae}}
        \item \hyperref[app:baselines]{F.3. Baselines \dotfill \pageref{app:baselines}}
    \end{itemize}
    \item \hyperref[app:add_results]{\textbf{Appendix G.} Additional Results \dotfill \pageref{app:add_results}}
    \begin{itemize}[label=, leftmargin=1.5em]
        \item \hyperref[app:ablation]{G.1. Detailed Hyperparameter Search and Ablation Studies \dotfill \pageref{app:ablation}}
        \item \hyperref[app:quant]{G.2. Additional Quantitative Results \dotfill \pageref{app:quant}}
        \item \hyperref[app:qual]{G.3. Additional Qualitative Results \dotfill \pageref{app:qual}}
        \item \hyperref[app:qual]{G.4. Intermediate Path Comparison \dotfill \pageref{app:path}}
    \end{itemize}
    \item \hyperref[app:manifold]{\textbf{Appendix H.} Manifold Adherence Analysis\dotfill \pageref{app:manifold}}
    \begin{itemize}[label=, leftmargin=1.5em]
        \item \hyperref[app:manifold_ext]{H.1. LPIPS-Based Path Plausibility Analysis \dotfill \pageref{app:manifold_ext}}
        \item \hyperref[app:manifold_distributional]{H.2. Distributional Diagnostics Beyond LPIPS \dotfill \pageref{app:manifold_distributional}}
        \item \hyperref[app:manifold_ext2]{H.3. Classifier-Confidence-Based Manifold Alignment Analysis \dotfill \pageref{app:manifold_ext2}}
    \end{itemize}
    \item \hyperref[app:cost]{\textbf{Appendix I.} Computational Cost \& Runtime \dotfill \pageref{app:cost}}
\end{itemize}
\vspace{-5mm}
\noindent\rule{\linewidth}{0.8pt}
\clearpage

\section{Notations}
\label{app:notation}

\begin{table}[ht]
    \centering
    \caption{\textbf{Table of notation.}}
    \label{tab:notation}
    \begin{tabular}{ll}
    \toprule 
    \textbf{Notation} & \textbf{Description} \\ 
    \midrule 
    \multicolumn{2}{l}{\textbf{Input Space \& General Attribution}} \\
    $f$ & Classifier function $f:\mathbb{R}^{n}\rightarrow[0,1]$ mapping input to a probability score. \\
    $x, x'$ & Target input image and neutral baseline reference vector in $\mathbb{R}^n$. \\
    $\mathcal{A}$ & Attribution map $\mathcal{A} \in \mathbb{R}^n$ assigning importance scores to features. \\
    $\gamma(t)$ & Continuous path function $[0,1]\rightarrow\mathbb{R}^{n}$ connecting the baseline to the input. \\
    $n$ & Dimensionality of the ambient data space $\mathcal{X} \cong \mathbb{R}^n$. \\
    \midrule
    \multicolumn{2}{l}{\textbf{Manifold \& Latent Space}} \\
    $\mathcal{M}$ & Data manifold, a smooth $k$-dimensional submanifold of $\mathbb{R}^n$. \\
    $\mathcal{X}$ & Ambient data space $\mathbb{R}^n$ containing the data manifold $\mathcal{M}$. \\ 
    $\mathcal{Z}$ & Low-dimensional latent space $\mathcal{Z} \cong \mathbb{R}^d$ where $d \ll n$. \\
    $E, D$ & Pre-trained encoder $E: \mathcal{X} \to \mathcal{Z}$ and smooth decoder $D: \mathcal{Z} \to \mathcal{X}$. \\
    $T_x \mathcal{M}$ & Tangent space of the manifold $\mathcal{M}$ at point $x$. \\
    $J_D(z)$ & Jacobian matrix of the decoder $D$ evaluated at latent position $z$. \\
    $\mathcal{P}$ & Projection operator, specifically $\mathcal{P}_{T_{x}\mathcal{M}}$ for projection onto the tangent space. \\
    $\Delta x$ & Displacement or update vector in the input space. \\
    $\Delta x_{\parallel}^{(k)}$ & Tangential component of the update vector, projected onto the tangent space $T_{x}\mathcal{M}$. \\
    $\Delta x_{\perp}^{(k)}$ & Orthogonal component of the update vector relative to the tangent space, representing manifold deviation error. \\
    \midrule
    \multicolumn{2}{l}{\textbf{MA-GIG Specific Notations}} \\
    $q$ & Selection fraction determining the sparsity of gradient guidance. \\
    $g^{(k)}$ & Latent gradient vector at step $k$, computed via the chain rule through the decoder. \\
    $S^{(k)}$ & Index set of latent dimensions with low-gradient magnitudes selected at step $k$. \\
    $g^{(k)}_{S^{(k)}}$ & Latent gradient vector at step $k$, indexed by the set of selected dimensions $S^{(k)}$. \\
    $\tau^{(k)}$ & Adaptive threshold for selecting latent features based on the $q$-quantile of gradients. \\
    $K$ & Total number of discrete integration steps along the attribution path. \\
    $\eta$ & Step size parameter for latent space updates. \\
    $\alpha, t$ & Path position or interpolation parameters in the interval $[0, 1]$. \\
    $\psi(x, \delta)$ & DiffID score measuring attribution faithfulness at perturbation level $\delta$. \\
    \bottomrule
    \end{tabular}
\end{table}

\section{Geometric Background of Data Manifold}
\label{app:geometric_background}

In this section, we provide the formal mathematical foundations for the data manifold and its metric properties, which extend the summary provided in Section~\ref{subsec:manifold_hypothesis}. The formal definition of the data manifold is defined as:

\begin{assumption}[Data Manifold]
\label{assump:appendix_manifold}
    The support $\mathcal{X}$ of the data distribution lies on a smooth $k$-dimensional submanifold $\mathcal{M} \subset \mathbb{R}^n$, where $k \ll n$. We assume $\mathcal{M}$ is a compact Riemannian manifold.
\end{assumption}

As a submanifold of the Euclidean space $\mathbb{R}^n$, $\mathcal{M}$ inherits the Riemannian metric $g$ induced by the standard Euclidean inner product. For any point $x \in \mathcal{M}$, the metric $g_x$ is defined on the tangent space $T_x\mathcal{M}$ as:

\begin{equation}
    g_x(v, w) = \langle v, w \rangle_{\mathbb{R}^n}, \quad \forall v, w \in T_x\mathcal{M}.
\end{equation}

The tangent space $T_x\mathcal{M}$ is the $k$-dimensional linear subspace of $\mathbb{R}^n$ that best approximates $\mathcal{M}$ near $x$. Any vector $v \in T_x\mathcal{M}$ represents a valid direction of variation that stays on the manifold to the first order.

To further characterize the drift described in Proposition~\ref{eq:off_manifold_drift}, we define the \textit{reach} $\tau$ of the manifold $\mathcal{M}$. The reach is the largest radius such that any point at a distance less than $\tau$ from $\mathcal{M}$ has a unique projection onto $\mathcal{M}$. This geometric property determines the maximum allowable step size for numerical path integration to avoid significant off-manifold deviation.

\section{Proofs}\label{app:proofs}

\begin{propappx}{prop:gig_deviation}{Off-Manifold Drift}
Let $\mathcal{M} \subset \mathbb{R}^n$ be a $C^2$ submanifold with positive reach $\tau > 0$. Consider a GIG update step $x^{(k+1)} = x^{(k)} + \Delta x^{(k)}$ starting from $x^{(k)} \in \mathcal{M}$. Let $\Delta x_{\perp}^{(k)} = \Delta x^{(k)} - \mathcal{P}_{T_{x^{(k)}}\mathcal{M}}(\Delta x^{(k)})$ be the component of the update orthogonal to the tangent space. If the step size is sufficiently small ($\|\Delta x^{(k)}\| \le \tau/2$) and the orthogonal component is dominant such that $\|\Delta x_{\perp}^{(k)}\| > \frac{1}{\tau} \|\Delta x^{(k)}\|^2$, then the updated point strictly leaves the manifold:
\begin{equation}
x^{(k+1)} \notin \mathcal{M}.
\end{equation}
\label{eq:off_manifold_drift}
\end{propappx}

\begin{proof}
Let $x = x^{(k)}$ and $\Delta x = \Delta x^{(k)}$. We proceed by contradiction. Assume that the updated point lies on the manifold, i.e., $y = x + \Delta x \in \mathcal{M}$. According to \citet{federer1959curvature}, if a set $\mathcal{M}$ has reach $\tau$, then for any point $x \in \mathcal{M}$, the manifold cannot intersect the interior of any open ball of radius $\tau$ that is tangent to $\mathcal{M}$ at $x$. This geometric constraint provides a bound on how far a point on the manifold can deviate from the tangent space $T_x\mathcal{M}$ over a given Euclidean distance.

A standard inequality derived from the positive reach condition states that for any $y \in \mathcal{M}$ such that $\|y - x\| < \tau$, the distance of $y$ to the tangent plane $T_x\mathcal{M}$ satisfies:
\begin{equation}
\label{eq:reach_bound}
d(y, T_x\mathcal{M}) \le \frac{1}{2\tau} \|y - x\|^2.
\end{equation}
By definition, the distance of the updated point $y = x + \Delta x$ to the tangent space is exactly the norm of the orthogonal component:
\begin{equation}
d(y, T_x\mathcal{M}) = \| \mathcal{P}_{T_x\mathcal{M}}^\perp (\Delta x) \| = \|\Delta x_{\perp}\|.
\end{equation}
Substituting this into Inequality~\eqref{eq:reach_bound}, if $y \in \mathcal{M}$, it must hold that:
\begin{equation}
\|\Delta x_{\perp}\| \le \frac{1}{2\tau} \|\Delta x\|^2.
\end{equation}

However, the proposition hypothesis states that the orthogonal component is dominant:
\begin{equation}
\|\Delta x_{\perp}\| > \frac{1}{\tau} \|\Delta x\|^2.
\end{equation}
Since $\frac{1}{\tau} > \frac{1}{2\tau}$, the hypothesis strictly violates the necessary condition for $y$ to lie on $\mathcal{M}$. Therefore, the assumption that $y \in \mathcal{M}$ must be false, proving that $x^{(k+1)} \notin \mathcal{M}$.

\end{proof}

\section{Axiomatic Properties of MA-GIG}
\label{app:axioms}

MA-GIG defines a path-based attribution method by using decoder-induced latent trajectories as the main integration path. Under the idealized autoencoder assumption, this path satisfies the standard axiomatic properties of path methods~\citep{sundararajan2017axiomatic}. With imperfect VAEs, the guarantees hold with respect to the reconstructed or endpoint-corrected path described below. We briefly outline these properties below; formal proofs are provided in Appendix~\ref{app:axiomatic_proofs}.

\paragraph{Completeness.}
The \emph{Completeness} axiom requires that the sum of feature attributions equals the difference in model output: $\sum \mathcal{A}_i = f(x) - f(x')$. For a continuous path whose endpoints are exactly $x'$ and $x$, completeness follows from the fundamental theorem of calculus. For decoder-induced paths, this is exact under Assumption~\ref{assump:perfect_ae}; with imperfect VAEs, the statement applies to reconstructed endpoints unless endpoint correction is used.

\paragraph{Sensitivity.}
MA-GIG satisfies \emph{Sensitivity(b)} (Dummy Property), ensuring that features with zero gradients along the path receive zero attribution. Under the same endpoint conditions used for Completeness, it also satisfies \emph{Sensitivity(a)}, which mandates non-zero attribution for features that functionally distinguish the input from the baseline.

\paragraph{Implementation Invariance.}
MA-GIG computes attributions using only the gradients of the composite function $f \circ D$ and the path trajectory. Since functionally equivalent networks produce identical gradients for the same inputs, the resulting path integrals are identical. Thus, MA-GIG satisfies implementation invariance.

\subsection{Axiomatic Proofs}
\label{app:axiomatic_proofs}

We provide formal justifications for the axiomatic properties of MA-GIG.

\paragraph{Notation.}
Let $f: \mathbb{R}^n \to \mathbb{R}$ be the classifier. Let $D: \mathcal{Z} \to \mathcal{X}$ be the decoder satisfying Assumption~\ref{assump:perfect_ae} (Perfect Autoencoder). The MA-GIG path is defined as $\gamma_x(t) = D(\gamma_z(t))$ for $t \in [0, 1]$, where $\gamma_z(0) = z' = E(x')$ and $\gamma_z(1) = z = E(x)$.

\begin{proposition}[Completeness of MA-GIG]
\label{prop:completeness}
Under Assumption~\ref{assump:perfect_ae}, the total attribution assigned by MA-GIG satisfies:
\begin{equation}
\sum_{i=1}^n \mathcal{A}_i = f(x) - f(x').
\end{equation}
\end{proposition}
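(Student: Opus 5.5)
The plan is to reduce the statement to the fundamental theorem of calculus applied to the scalar function $h(t) = f(\gamma_x(t))$, where $\gamma_x(t) = D(\gamma_z(t))$ is the decoder-induced path defined in the Notation paragraph. The continuous MA-GIG attribution is the path integral $\mathcal{A}_i = \int_0^1 \frac{\partial f(\gamma_x(t))}{\partial x_i} \frac{\mathrm{d}\gamma_{x,i}(t)}{\mathrm{d}t}\,\mathrm{d}t$. I take $\gamma_z$ to be a piecewise-linear (hence absolutely continuous, piecewise $C^1$) interpolation of the latent iterates $z^{(0)}, \dots, z^{(K)}$ produced by Algorithm~\ref{alg:magig}. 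Since $D$ is smooth, $\gamma_x$ is then piecewise $C^1$, and for $f$ continuously differentiable the chain rule gives
\begin{equation*}
h'(t) = \nabla_x f(\gamma_x(t))^\top \dot\gamma_x(t) = \sum_{i=1}^n \frac{\partial f(\gamma_x(t))}{\partial x_i}\,\dot\gamma_{x,i}(t)
\end{equation*}
on each linear piece. Summing over $i$ and exchanging the finite sum with the integral yields $\sum_i \mathcal{A}_i = \int_0^1 h'(t)\,\mathrm{d}t = h(1) - h(0)$, where the fundamental theorem of calculus is applied piecewise and the pieces telescope.

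The remaining step, and the one that genuinely uses Assumption~\ref{assump:perfect_ae}, is to identify the endpoints. By construction $\gamma_z(1) = E(x)$ and $\gamma_z(0) = E(x')$, so $\gamma_x(1) = D(E(x))$ and $\gamma_x(0) = D(E(x'))$. The Reconstruction property then gives $D(E(x)) = x$ and $D(E(x')) = x'$, provided $x, x' \in \mathcal{M}$. This yields $h(1) - h(0) = f(x) - f(x')$.

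I expect this endpoint step to be the main obstacle. The input $x$ lies in the data support and hence on $\mathcal{M}$. A zero baseline, however, need not lie on $\mathcal{M}$. I would handle this in one of two ways. The first is to assume explicitly that $x' \in \mathcal{M}$, in the same spirit as the idealized assumption. The second is to note that Algorithm~\ref{alg:magig} sets $\tilde{x}^{(0)} = x'$ and $\tilde{x}^{(K)} = x$ directly, which is the ``endpoint correction'' mentioned in the text; I would then define $\gamma_x$ by appending straight segments from $x'$ to $D(z')$ and from $D(z)$ to $x$. In either case $\gamma_x$ remains a continuous piecewise-$C^1$ path from $x'$ to $x$, and the telescoping argument goes through unchanged.

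Finally, I would remark on the discrete version in Algorithm~\ref{alg:magig}. There the left Riemann sum $\sum_k \nabla f(\tilde{x}^{(k)})^\top(\tilde{x}^{(k+1)} - \tilde{x}^{(k)})$ approximates $\sum_k [f(\tilde{x}^{(k+1)}) - f(\tilde{x}^{(k)})] = f(x) - f(x')$ only up to $O(\max_k \|\tilde{x}^{(k+1)} - \tilde{x}^{(k)}\|^2)$ when $\nabla f$ is locally Lipschitz. Completeness is therefore exact for the continuous path integral and holds asymptotically as $K \to \infty$ for the implementation.
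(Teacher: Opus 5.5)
Your proof is the same as the paper's: sum over features, recognize the integrand as $\frac{d}{dt}f(\gamma_x(t))$ via the chain rule, apply the fundamental theorem of calculus (your piecewise/telescoping version is just a more careful rendering), and identify the endpoints through $D(E(x))=x$ and $D(E(x'))=x'$. Your remark that $D(E(x'))=x'$ needs $x'\in\mathcal{M}$ is a genuine sharpening of a step the paper uses without comment; by Surjectivity $D(E(x'))\in\mathcal{M}$, so the condition is in fact necessary. Your closing claim that the discrete sum becomes exact as $K\to\infty$ goes beyond the proposition and is not justified as stated: Algorithm~\ref{alg:magig}'s step sizes are set by $\eta$, not $K$, and under your second endpoint option the jump from $x'$ to $D(z^{(1)})$ does not shrink.
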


\begin{proof}
The attribution for feature $i$ in MA-GIG is defined as the path integral of the gradient:
\begin{equation}
\mathcal{A}_i = \int_{0}^{1} \frac{\partial f(\gamma_x(t))}{\partial x_i} \frac{d \gamma_{x,i}(t)}{dt} \, dt.
\end{equation}
Summing over all features $i \in \{1, \dots, n\}$:
\begin{align}
\sum_{i=1}^n \mathcal{A}_i &= \sum_{i=1}^n \int_{0}^{1} \frac{\partial f(\gamma_x(t))}{\partial x_i} \frac{d \gamma_{x,i}(t)}{dt} \, dt \\
&= \int_{0}^{1} \left( \sum_{i=1}^n \frac{\partial f(\gamma_x(t))}{\partial x_i} \frac{d \gamma_{x,i}(t)}{dt} \right) \, dt \\
&= \int_{0}^{1} \nabla_x f(\gamma_x(t)) \cdot \dot{\gamma}_x(t) \, dt.
\end{align}
By the multivariate chain rule, the integrand is the total derivative of the composite function with respect to $t$:
\begin{equation}
\nabla_x f(\gamma_x(t)) \cdot \dot{\gamma}_x(t) = \frac{d}{dt} f(\gamma_x(t)).
\end{equation}
Applying the Fundamental Theorem of Calculus:
\begin{equation}
\int_{0}^{1} \frac{d}{dt} f(\gamma_x(t)) \, dt = f(\gamma_x(1)) - f(\gamma_x(0)).
\end{equation}
By the definition of the path endpoints and Assumption~\ref{assump:perfect_ae}:
\begin{align}
\gamma_x(1) &= D(z) = D(E(x)) = x, \\
\gamma_x(0) &= D(z') = D(E(x')) = x'.
\end{align}
Thus, the sum of attributions equals $f(x) - f(x')$.
\end{proof}


\begin{remark}[Completeness in Practice]
For imperfect VAEs where $D(E(x)) \neq x$, the decoder-induced continuous path satisfies completeness with respect to reconstructed endpoints, i.e., $f(D(E(x))) - f(D(E(x')))$. In implementation, we use an endpoint-corrected discrete path with $\tilde{x}^{(0)}=x'$, $\tilde{x}^{(K)}=x$, and $\tilde{x}^{(k)}=D(z^{(k)})$ for $1 \leq k \leq K-1$. This targets the original output difference $f(x)-f(x')$ through a polygonal path connecting the corrected endpoints and decoded intermediate samples. The resulting finite-sum attribution should therefore be interpreted as a numerical approximation to this endpoint-corrected path integral, with residual error determined by discretization and reconstruction quality.
\end{remark}

\paragraph{Completeness residual.}
We also measure the finite-sum completeness residual
$|\sum_i \mathcal{A}_i - (f(x)-f(x'))|$ with $K=200$ on ResNet18.
Endpoint correction is applied as described above. Compared to GIG,
MA-GIG reduces the residual across all datasets, indicating that the latent
path does not amplify discretization error.

\begin{table}[ht]
    \centering
    \caption{\textbf{Completeness residual reduction relative to GIG.}}
    \label{tab:completeness_residual}
    \begin{tabular}{lc}
        \toprule
        \textbf{Dataset} & \textbf{Residual Reduction vs. GIG} \\
        \midrule
        ImageNet & 9.8\% \\
        Oxford-IIIT Pet & 33.0\% \\
        Oxford 102 Flower & 30.9\% \\
        \bottomrule
    \end{tabular}
\end{table}

\begin{proposition}[Sensitivity-b / Dummy Property]
If a feature $x_i$ has no effect on the model output (i.e., $\frac{\partial f}{\partial x_i} = 0$ for all $x \in \mathbb{R}^n$), then $\mathcal{A}_i = 0$.
\end{proposition}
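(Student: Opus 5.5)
The argument goes directly through the definition of the MA-GIG attribution as a path integral along the decoder-induced path $\gamma_x(t) = D(\gamma_z(t))$, the same notation used in the proof of Proposition~\ref{prop:completeness}. For each feature $i$ we have
\begin{equation}
\mathcal{A}_i = \int_{0}^{1} \frac{\partial f(\gamma_x(t))}{\partial x_i} \, \frac{d \gamma_{x,i}(t)}{dt} \, dt .
\end{equation}
By hypothesis, $\partial f/\partial x_i$ vanishes identically on $\mathbb{R}^n$, so in particular it vanishes at every point $\gamma_x(t)$ on the path. The integrand is therefore zero for every $t \in [0,1]$, and the integral equals zero.

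The one thing to check is that the integral is well defined, meaning the other factor $\dot{\gamma}_{x,i}(t)$ is finite. The decoder $D$ is smooth by Assumption~\ref{assump:perfect_ae}. The latent path $\gamma_z$ is piecewise $C^1$, since it is built from finitely many coordinate-wise linear segments. Hence $\gamma_x = D \circ \gamma_z$ is piecewise $C^1$, and $\dot{\gamma}_{x,i}$ exists except at finitely many breakpoints, where it stays bounded. A zero function times a bounded, almost-everywhere defined function integrates to zero. Note that this step does not use the endpoint conditions at all, so Sensitivity(b) holds even for imperfect VAEs.

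The same argument covers the discrete implementation in Algorithm~\ref{alg:magig}, including the endpoint-corrected polygonal path. There,
\begin{equation}
\mathcal{A}_i = \sum_{k=0}^{K-1} \frac{\partial f(\tilde{x}^{(k)})}{\partial x_i}\,\bigl(\tilde{x}^{(k+1)}_i - \tilde{x}^{(k)}_i\bigr),
\end{equation}
and every gradient factor is zero by hypothesis, so each summand and hence the sum vanish.

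I do not expect a real obstacle. The only subtle point is the regularity of the path: the greedy construction produces only a piecewise-smooth trajectory, and it has to be recorded explicitly that such paths are admissible in the definition of the path integral.
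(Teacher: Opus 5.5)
Your proof is correct and follows essentially the same route as the paper: the hypothesis makes the factor $\partial f(\gamma_x(t))/\partial x_i$ vanish identically along the path, so the integrand and hence $\mathcal{A}_i$ are zero. Your additional remarks are sound refinements that the paper omits: the piecewise-$C^1$ regularity of $\gamma_x$, the independence from the endpoint conditions, and the same conclusion for the discrete endpoint-corrected sum in Algorithm~\ref{alg:magig}.
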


\begin{proof}
The attribution is given by $\mathcal{A}_i = \int_{0}^{1} \frac{\partial f(\gamma_x(t))}{\partial x_i} \dot{\gamma}_{x,i}(t) \, dt$. Since $x_i$ is a dummy variable, the partial derivative term $\frac{\partial f(\gamma_x(t))}{\partial x_i}$ is identically zero for all $t \in [0, 1]$. Consequently, the integral vanishes, and $\mathcal{A}_i = 0$.
\end{proof}

\section{Evaluation Metrics}
\label{app:diffid_metric}

The DiffID score~\citep{yang2023local} provides a unified framework for measuring attribution quality by evaluating the consistency between pixel insertion and deletion~\citep{petsiuk2018rise} processes. Its primary advantage lies in mitigating the distribution shift introduced when modifying input images, a confounding factor that often biases standard perturbation metrics.

\subsection{Insertion and Deletion Games}
\label{app:ins_del}
Let $x$ be the input image, $f$ the classifier, and $\mathcal{A}$ the attribution map. We define two complementary perturbation processes:
\begin{itemize}[leftmargin=*,itemsep=2pt]
    \item $\text{Ins}(x, \alpha)$: The \textbf{Insertion} operation creates a new image by retaining the top $\alpha$ fraction of pixels with the \textit{highest} attribution scores from $x$ and replacing the remaining $(1-\alpha)$ fraction with a baseline value.
    \item $\text{Del}(x, \delta)$: The \textbf{Deletion} operation creates a new image by removing the bottom $\delta$ fraction of pixels with the \textit{lowest} attribution scores from $x$ (replacing them with a baseline) and keeping the remaining $(1-\delta)$ fraction.
\end{itemize}
Following established practices to minimize high-frequency edge artifacts caused by pixel removal, we employ mean-imputation for the baseline values in both operations.

\subsection{DiffID}
\label{app:diffid}
The DiffID score $\psi(x, \delta)$ at a perturbation level $\delta \in [0, 1]$ is defined as the difference in model confidence between inserting the most salient features and deleting the least salient ones:
\begin{equation}
\psi(x, \delta) = f(\text{Ins}(x, 1-\delta)) - f(\text{Del}(x, \delta)).
\end{equation}
Here, $f(\cdot)$ denotes the model's output probability for the target class. This formulation leverages the insight that if an attribution map perfectly ranks feature importance, constructing an image using the top $(1-\delta)$ features ($\text{Ins}$) should yield the same model output as removing the bottom $\delta$ features ($\text{Del}$), provided there is no distribution shift. Divergence between these two quantities reflects the quality of the attribution.

The metric satisfies intuitive boundary conditions:
\begin{itemize}
    \item At $\delta=0$: $\text{Ins}(x, 1)$ is the full image $x$, and $\text{Del}(x, 0)$ is also the full image (nothing removed). Thus, $\psi(x, 0) = f(x) - f(x) = 0$.
    \item At $\delta=1$: $\text{Ins}(x, 0)$ is the baseline image $x'$, and $\text{Del}(x, 1)$ is also the baseline image (everything removed). Thus, $\psi(x, 1) = f(x') - f(x') = 0$.
\end{itemize}

\paragraph{Aggregated Score.}
The final scalar metric is obtained by integrating the DiffID curve over the perturbation range:
\begin{equation}
\text{DiffID} = \int_0^1 \psi(x, \delta) \, d\delta.
\end{equation}
A higher DiffID score indicates a more faithful attribution map, distinguishing salient features from irrelevant ones more effectively than the baseline or random assignment.

\section{Implementation Details}
\label{app:impl_detail}
In this section, we describe the experimental setup used to evaluate our proposed method. We outline the datasets, classifier models, and generative models.

\subsection{Datasets and Models}\label{app:dataset_model}
We conduct our experiments on \textbf{three} standard image classification datasets: Oxford-IIIT Pet~\cite{parkhi2012cats}, Oxford 102 Flower~\cite{nilsback2008automated}, and ImageNet2012~\cite{deng2009imagenet}.

\begin{itemize}[leftmargin=*]
\item \textbf{Oxford-IIIT Pet.} This dataset consists of 7,390 images across 37 pet breed categories. We create a validation set by randomly holding out 5\% of the images (370 images) using a fixed random seed for reproducibility.

\item \textbf{Oxford 102 Flower.} This dataset comprises 8,189 images representing 102 flower categories. We utilize the official splits provided by \texttt{torchvision}. For the DiffID evaluation, we use the first 500 images from the official validation split.

\item \textbf{ImageNet2012.} We evaluate our method on the first 500 images from the official validation set, which contains 50,000 images across 1,000 classes.
\end{itemize}

For all datasets, all images are resized to 256x256 and normalized using the standard ImageNet mean [0.485, 0.456, 0.406] and standard deviation [0.229, 0.224, 0.225].

We evaluate explanations using three widely adopted architectures: \textbf{VGG16}~\citep{simonyan2015very}, \textbf{InceptionV1}~\citep{szegedy2015going}, and \textbf{ResNet18}~\citep{he2016deep}. All models are pretrained on ImageNet and further finetuned on each dataset.

\subsection{Pre-Trained VAE Backbones for Data Manifold}\label{app:vae}
We employ four pre-trained generative models to validate our method across different latent manifold structures. All models are implemented using the \texttt{diffusers}~\cite{patrick2022diffusers} library.

\paragraph{Generative Backbones.} 

We employ the autoencoder components from four pre-trained generative models to validate our method across different latent manifold structures. Specifically, we compare the VAE from a model trained on ImageNet with VAEs from text-to-image models trained on large-scale LAION datasets (SD1, SD2, KD). All models are implemented using the \texttt{diffusers} library.

\begin{itemize}[leftmargin=*] 
    \item \textbf{MAR} (Masked Autoregressive Image Generation): A generalized autoregressive model~\cite{li2024autoregressive} trained on ImageNet-1K. Unlike conventional autoregressive approaches that rely on discrete tokens, MAR operates directly on continuous-valued tokens by employing a diffusion loss, thereby eliminating the need for vector quantization. We use the official implementation and checkpoints available at \url{https://github.com/LTH14/mar}.

    \item \textbf{SD1} (Stable Diffusion v1-1): A latent diffusion model~\cite{rombach2022high} trained on the LAION-2B (en) dataset. It utilizes a fixed CLIP ViT-L/14 text encoder and a KL-regularized autoencoder with a downsampling factor of 8 (mapping inputs to 4-channel latents). We use the \texttt{CompVis/stable-diffusion-v1-1} checkpoint.

    \item \textbf{SD2} (Stable Diffusion v2-1): An improved version of the latent diffusion model framework~\cite{rombach2022high} trained on subsets of LAION-5B with a larger text encoder (OpenCLIP-ViT/H). Unlike SD1, it employs the $v$-prediction objective during training. Comparing SD1 and SD2 allows us to analyze sensitivity within the same architectural family but under different training objectives and text embedding spaces. We use the \texttt{stabilityai/stable-diffusion-2-1} checkpoint.

    \item \textbf{KD} (Kandinsky 2.1): A text-conditional diffusion model~\cite{razzhigaev2023kandinsky} based on unCLIP architecture. Unlike the SD family, it utilizes mCLIP for embeddings and employs a MoVQGAN decoder for latent reconstruction. It is trained on large-scale datasets including LAION HighRes and fine-tuned on high-quality internal datasets. We use the \texttt{kandinsky-community/kandinsky-2-1} checkpoint.
\end{itemize}

\subsection{Baselines}\label{app:baselines}

Here we describe the specific implementations and hyperparameters used for reproducibility. Hyperparameters are chosen based on official implementations or recommendations from the respective papers, and are applied consistently across all datasets and classifiers.

\begin{itemize}
    \item \textbf{G$\times$I (Gradient $\times$ Input)}~\citep{shrikumar2016not}: A simple gradient-based baseline that multiplies input gradients with the input values.
    
    \item \textbf{Integrated Gradients (IG)}~\citep{sundararajan2017axiomatic}: We use 200 steps for the path integration with a zero baseline.
    
    \item \textbf{Guided IG (GIG)}~\citep{kapishnikov2021guided}: We use 200 integration steps with a feature selection fraction of 0.1 (10\% of features updated per step) and a zero baseline.
    
    \item \textbf{IG$^2$}~\citep{zhuo2024ig2}: We use 201 steps with a step size of 0.02.
    
    \item \textbf{Adversarial Gradient Integration (AGI)}~\citep{pan2021explaining}: We use 15 maximum iterations, 10 negative classes, and a step size of 0.05.
    
    \item \textbf{Enhanced IG (EIG)}~\citep{jha2020enhanced}: Implemented with 200 integration steps using linear interpolation in the VAE latent space with a zero baseline.
    
    \item \textbf{Manifold IG (MIG)}~\citep{zaher2024manifold}: Implemented with 200 integration steps using geodesic paths in the VAE latent space with a zero baseline. We use a learning rate of $5 \times 10^{-5}$ for geodesic optimization with 2 iterations. 
    
    \item \textbf{MA-GIG (Ours)}: We use 200 integration steps with a feature selection fraction of 0.05 (5\% of latent features updated per step) and a zero baseline. 
\end{itemize}

\section{Additional Results}
\label{app:add_results}


In this section, we provide a comprehensive evaluation to further validate the robustness and effectiveness of MA-GIG. First, we investigate the sensitivity of our method to key hyperparameters, including the feature selection fraction and the choice of VAE backbones. Second, we perform an ablation study on the geometric path construction using Spherical Linear Interpolation (Slerp).

\subsection{Detailed Hyperparameter Search and Ablation Studies}\label{app:ablation}

We provide the detailed numerical results supporting the hyperparameter sensitivity and ablation analyses presented in the main text. Table~\ref{tab:hyperparameter_search} presents the performance comparison across different feature selection fractions and VAE configurations.

\begin{table}[ht]
    \centering
    \caption{\textbf{Performance comparison across different datasets and image classifier models.} The left section analyzes feature selection fraction, while the right section presents ablation studies on Slerp and VAE from specific generative backbones. (Frac.: fraction). Best results for each setting are highlighted in \textbf{bold}.}
    \label{tab:hyperparameter_search}
    \resizebox{\textwidth}{!}{
    \begin{tabular}{ll >{\centering\arraybackslash}p{2cm} >{\centering\arraybackslash}p{2cm} >{\centering\arraybackslash}p{2cm} >{\centering\arraybackslash}p{1.5cm} >{\centering\arraybackslash}p{1.5cm} >{\centering\arraybackslash}p{1.5cm}}
        \toprule
        \multirow{2}{*}{\textbf{Dataset}} & \multirow{2}{*}{\textbf{Classifier}} & \multicolumn{3}{c}{ \textbf{Fraction Analysis} (VAE = SD2, Slerp = True)} & \multicolumn{3}{c}{\textbf{Ablation Study} (Frac. = 0.05)} \\ 
        \cmidrule(lr){3-5} \cmidrule(lr){6-8}
         & & \textbf{0.05} & \textbf{0.1} & \textbf{0.2} & \textbf{w/o Slerp} & \textbf{w/ Slerp} & \textbf{Best VAE} \\ 
        \midrule
        \multirow{3}{*}{\textbf{OxfordPet}} 
         & \textbf{ResNet18}  & \textbf{0.4474} & 0.4360 & 0.4345 & \textbf{0.4637} & 0.4565 & MAR \\
         & \textbf{VGG16}     & 0.6171 & \textbf{0.6252} & 0.6141 & \textbf{0.4495} & 0.4474 & SD2 \\
         & \textbf{Inception} & \textbf{0.4117} & 0.4015 & 0.4066 & 0.4309 & \textbf{0.4405} & MAR \\ 
        \midrule
        \multirow{3}{*}{\textbf{OxfordFlower}} 
         & \textbf{ResNet18}  & \textbf{0.1933} & 0.1909 & 0.1853 & \textbf{0.2389} & 0.2307 & MAR \\
         & \textbf{VGG16}     & 0.3404 & 0.3442 & \textbf{0.3484} & \textbf{0.3682} & 0.3516 & KD \\
         & \textbf{Inception} & 0.2727 & \textbf{0.2880} & 0.2689 & \textbf{0.3067} & 0.2884 & SD1 \\ 
        \midrule
        \multirow{3}{*}{\textbf{ImageNet}} 
         & \textbf{ResNet18}  & \textbf{0.2638} & 0.2551 & 0.2584 & 0.2598 & \textbf{0.2638} & SD2 \\
         & \textbf{VGG16}     & \textbf{0.3429} & 0.3338 & 0.3407 & 0.3284 & \textbf{0.3429} & SD2 \\
         & \textbf{Inception} & 0.3120 & \textbf{0.3267} & 0.3142 & 0.3280 & \textbf{0.3351} & MAR \\ 
        \bottomrule
    \end{tabular}
    }
\end{table}

\subsection{Additional Quantitative Results}\label{app:quant}

Table~\ref{tab:quant_full} presents an extensive comparison of attribution faithfulness metrics (DiffID, Insertion, and Deletion) across three standard benchmarks: Oxford-IIIT Pet, Oxford 102 Flower, and ImageNet.
We evaluate the performance on three different backbones—ResNet18, VGG16, and InceptionV1—to verify the generalization capability of our method.

The results demonstrate that our proposed approaches significantly surpass traditional methods such as IG, and AGI.
Notably, \textit{MA-GIG (MAR)} and the spherical interpolation variant \textit{MA-GIG Slerp} consistently yield the best performance.
For instance, on the Oxford-IIIT Pet dataset with VGG16, our method improves the DiffID score by a large margin compared to the strongest baseline.
These results confirm that leveraging generative priors for attribution path generation leads to more faithful explanations across various visual domains and model capacities.

\begin{table*}[ht]
    \centering
    \footnotesize
    \setlength{\tabcolsep}{3pt}

    \begin{minipage}[t]{0.6\textwidth}
        \centering
        \caption{\textbf{Large-scale evaluation on 5,000 ImageNet validation images with ResNet18.}}
        \label{tab:imagenet_5k}
        \begin{tabular}{@{}lccc@{}}
            \toprule
            \textbf{Method} 
            & \textbf{\shortstack{DiffID($\uparrow$)}} 
            & \textbf{\shortstack{Ins. AUC($\uparrow$)}} 
            & \textbf{\shortstack{Del. AUC($\downarrow$)}} \\
            \midrule
            G$\times$I \cite{shrikumar2016not} & 0.0978 & 0.2317 & 0.1339 \\
            IG \cite{sundararajan2017axiomatic}& 0.1538 & 0.2592 & 0.1054 \\
            IG$^2$ \cite{zhuo2024ig2} & 0.1996 & 0.2861 & 0.0866 \\
            AGI \cite{pan2021explaining} & 0.1567 & 0.2550 & 0.0982 \\
            EIG \cite{jha2020enhanced} & 0.1493 & 0.2579 & 0.1086 \\
            MIG \cite{zaher2024manifold} & 0.1631 & 0.2822 & 0.1191 \\
            GIG \cite{kapishnikov2021guided} & 0.2532 & 0.3212 & \textbf{0.0680} \\
            \rowcolor{lightgray}
            MA-GIG & \textbf{0.2670} & \textbf{0.3429} & 0.0760 \\
            \bottomrule
        \end{tabular}
    \end{minipage}
    \hfill
    \begin{minipage}[t]{0.35\textwidth}
        \centering
        \caption{\textbf{Efficient MPRT sanity check on ImageNet with ResNet18.}}
        \label{tab:mprt}
        \begin{tabular}{@{}lc@{}}
            \toprule
            \textbf{Method} & \textbf{MPRT ($\uparrow$)} \\
            \midrule
            G$\times$I \cite{shrikumar2016not} & $-0.005$ \\
            IG \cite{sundararajan2017axiomatic} & 0.073 \\
            IG$^2$ \cite{zhuo2024ig2} & $-0.063$ \\
            AGI \cite{pan2021explaining} & $-0.066$ \\
            EIG \cite{jha2020enhanced} & 0.013 \\
            MIG \cite{zaher2024manifold} & $-0.003$ \\
            GIG \cite{kapishnikov2021guided} & 0.308 \\
            \rowcolor{lightgray}
            MA-GIG & \textbf{0.625} \\
            \bottomrule
        \end{tabular}
    \end{minipage}
\end{table*}

\paragraph{Large-scale ImageNet evaluation.}
To verify that the ImageNet results are not an artifact of the 500-image subset, we additionally evaluate 5,000 ImageNet validation images using ResNet18. MA-GIG preserves the ranking observed in the main evaluation and achieves the best DiffID and Insertion AUC.

\paragraph{Model-randomization sanity check.}
We evaluate attribution dependence on model parameters using Efficient MPRT on ImageNet/ResNet18. Higher scores indicate that attributions change more consistently as model parameters are randomized. MA-GIG obtains the highest score, suggesting that its explanations are more tied to learned model behavior than to static input or decoder artifacts.

\begin{table*}[!b]
\centering
\caption{Quantitative comparison on Oxford-IIIT Pet, Oxford 102 Flower, and ImageNet datasets. Best results are highlighted in \textbf{bold}, and second-best results are \underline{underlined}.}
\label{tab:quant_full}
\resizebox{\textwidth}{!}{%
\begin{tabular}{llrrrrrrrrr}
\toprule
& & \multicolumn{3}{c}{\textbf{ResNet18}} & \multicolumn{3}{c}{\textbf{VGG16}} & \multicolumn{3}{c}{\textbf{InceptionV1}} \\
\cmidrule(lr){3-5} \cmidrule(lr){6-8} \cmidrule(lr){9-11}
\textbf{Data} & \textbf{Method} & \textbf{DiffID ($\uparrow$)} & \textbf{Ins ($\uparrow$)} & \textbf{Del ($\downarrow$)} & \textbf{DiffID ($\uparrow$)} & \textbf{Ins ($\uparrow$)} & \textbf{Del ($\downarrow$)} & \textbf{DiffID ($\uparrow$)} & \textbf{Ins ($\uparrow$)} & \textbf{Del ($\downarrow$)} \\
\midrule
& G $\times$ I \cite{shrikumar2016not} & 0.2384 & 0.4378 & 0.1994 & 0.4060 & 0.5174 & 0.1114 & 0.2255 & 0.3940 & 0.1685 \\
\rowcolor{palegray}
\cellcolor{white} & IG \cite{sundararajan2017axiomatic} & 0.3790 & 0.5186 & 0.1396 & 0.5255 & 0.6057 & 0.0802 & 0.3438 & 0.4748 & 0.1309 \\
\cellcolor{white} & IG$^2$ \cite{zhuo2024ig2} & 0.3823 & 0.5264 & 0.1441 & \underline{0.6075} & 0.6829 & 0.0754 & 0.4273 & \underline{0.5315} & \underline{0.1042} \\
\rowcolor{palegray}
\cellcolor{white} & AGI \cite{pan2021explaining} & 0.2787 & 0.4453 & 0.1667 & 0.4471 & 0.5369 & 0.0898 & 0.3381 & 0.4589 & 0.1207 \\
\cellcolor{white} & EIG \cite{jha2020enhanced} & 0.3595 & 0.4964 & 0.1369 & 0.4949 & 0.5796 & 0.0847 & 0.3306 & 0.4658 & 0.1351 \\
\rowcolor{palegray}
\cellcolor{white} & MIG \cite{zaher2024manifold} & 0.3486 & 0.4889 & 0.1402 & 0.4850 & 0.5664 & 0.0814 & 0.3180 & 0.4619 & 0.1438 \\
\cellcolor{white} & GIG \cite{kapishnikov2021guided} & 0.3634 & 0.5093 & 0.1459 & 0.5556 & 0.5889 & \textbf{0.0333} & 0.3586 & 0.4880 & 0.1294 \\
\cmidrule(lr){2-11}
\rowcolor{palegray}
\cellcolor{white} & MA-GIG (SD1) & 0.4255 & 0.5550 & 0.1294 & 0.5745 & 0.6547 & 0.0802 & 0.4033 & 0.5147 & 0.1114 \\
\cellcolor{white} & MA-GIG (SD1, w/ Slerp) & 0.4387 & 0.5652 & 0.1264 & 0.5985 & 0.6751 & 0.0766 & 0.3937 & 0.5042 & 0.1105 \\
\rowcolor{palegray}
\cellcolor{white} & MA-GIG (SD2) & 0.4495 & 0.5697 & \textbf{0.1201} & 0.5901 & 0.6601 & 0.0700 & 0.4075 & 0.5213 & 0.1138 \\
\cellcolor{white} & MA-GIG (SD2, w/ Slerp) & 0.4474 & 0.5739 & 0.1264 & \textbf{0.6171} & \textbf{0.6979} & 0.0808 & 0.4117 & 0.5252 & 0.1135 \\
\rowcolor{palegray}
\cellcolor{white} & MA-GIG (MAR) & \textbf{0.4637} & \textbf{0.5886} & 0.1249 & 0.5913 & 0.6730 & 0.0817 & \underline{0.4309} & 0.5282 & \textbf{0.0973} \\
\cellcolor{white} & MA-GIG (MAR, w/ Slerp) & \underline{0.4565} & \underline{0.5805} & \underline{0.1240} & 0.6039 & \underline{0.6832} & 0.0793 & \textbf{0.4405} & \textbf{0.5495} & 0.1090 \\
\rowcolor{palegray}
\cellcolor{white} & MA-GIG (KD) & 0.3850 & 0.5225 & 0.1375 & 0.5670 & 0.6384 & 0.0715 & 0.3718 & 0.4919 & 0.1201 \\
\multirow{-15}{*}{\rotatebox[origin=c]{90}{\textbf{Oxford-IIIT Pet}}}
\cellcolor{white} & MA-GIG (KD, w/ Slerp) & 0.4048 & 0.5333 & 0.1285 & 0.5883 & 0.6559 & \underline{0.0676} & 0.3814 & 0.4928 & 0.1114 \\
\midrule
\cellcolor{white} & G $\times$ I \cite{shrikumar2016not} & 0.1222 & 0.2338 & 0.1116 & 0.2784 & 0.3576 & 0.0791 & 0.2000 & 0.2813 & 0.0813 \\
\rowcolor{palegray}
\cellcolor{white} & IG \cite{sundararajan2017axiomatic} & 0.1769 & 0.2740 & 0.0971 & 0.3184 & 0.3851 & 0.0667 & 0.2551 & 0.3247 & 0.0696 \\
\cellcolor{white} & IG$^2$ \cite{zhuo2024ig2} & 0.0193 & 0.1713 & 0.1520 & 0.2224 & 0.3304 & 0.1080 & 0.0816 & 0.2053 & 0.1238 \\
\rowcolor{palegray}
\cellcolor{white} & AGI \cite{pan2021explaining} & 0.0136 & 0.1569 & 0.1433 & 0.1402 & 0.2682 & 0.1280 & 0.0787 & 0.1947 & 0.1160 \\
\cellcolor{white} & EIG \cite{jha2020enhanced} & 0.1696 & 0.2687 & 0.0991 & 0.3084 & 0.3758 & 0.0673 & 0.2507 & 0.3240 & 0.0733 \\
\rowcolor{palegray}
\cellcolor{white} & MIG \cite{zaher2024manifold} & 0.1671 & 0.2707 & 0.1036 & 0.3073 & 0.3753 & 0.0680 & 0.2458 & 0.3202 & 0.0744 \\
\cellcolor{white} & GIG \cite{kapishnikov2021guided} & 0.1891 & 0.2720 & \textbf{0.0829} & 0.2542 & 0.3424 & 0.0882 & 0.2551 & 0.3282 & 0.0731 \\
\cmidrule(lr){2-11}
\rowcolor{palegray}
\cellcolor{white} & MA-GIG (SD1) & 0.2249 & 0.3240 & 0.0991 & 0.3367 & 0.4109 & 0.0742 & 0.3067 & 0.3800 & 0.0733 \\
\cellcolor{white} & MA-GIG (SD1, w/ Slerp) & 0.2009 & 0.3064 & 0.1056 & 0.3333 & 0.4131 & 0.0798 & 0.2884 & 0.3638 & 0.0753 \\
\rowcolor{palegray}
\cellcolor{white} & MA-GIG (SD2) & 0.2324 & 0.3253 & 0.0929 & 0.3429 & 0.4191 & 0.0762 & \underline{0.3073} & 0.3756 & 0.0682 \\
\cellcolor{white} & MA-GIG (SD2, w/ Slerp) & 0.1933 & 0.2998 & 0.1064 & 0.3404 & 0.4196 & 0.0791 & 0.2727 & 0.3451 & 0.0724 \\
\rowcolor{palegray}
\cellcolor{white} & MA-GIG (MAR) & \textbf{0.2389} & \textbf{0.3333} & 0.0944 & 0.3458 & 0.4222 & 0.0764 & \textbf{0.3131} & \textbf{0.3864} & 0.0733 \\
\cellcolor{white} & MA-GIG (MAR, w/ Slerp) & 0.2307 & \underline{0.3269} & 0.0962 & 0.3451 & \underline{0.4244} & 0.0793 & 0.2958 & 0.3720 & 0.0762 \\
\rowcolor{palegray}
\cellcolor{white} & MA-GIG (KD) & \underline{0.2344} & 0.3222 & \underline{0.0878} & \textbf{0.3682} & \textbf{0.4269} & \textbf{0.0587} & 0.2989 & 0.3616 & \textbf{0.0627} \\
\multirow{-15}{*}{\rotatebox[origin=c]{90}{\textbf{Oxford 102 Flower}}} \cellcolor{white} & MA-GIG (KD, w/ Slerp) & 0.1933 & 0.2920 & 0.0987 & \underline{0.3516} & 0.4140 & \underline{0.0624} & 0.2733 & 0.3402 & \underline{0.0669} \\
\midrule
\cellcolor{white} & G $\times$ I \cite{shrikumar2016not} & 0.1038 & 0.2278 & 0.1240 & 0.1882 & 0.2749 & 0.0867 & 0.1380 & 0.2776 & 0.1396 \\
\rowcolor{palegray}
\cellcolor{white} & IG \cite{sundararajan2017axiomatic} & 0.1358 & 0.2507 & 0.1149 & 0.2469 & 0.3222 & 0.0753 & 0.2020 & 0.3138 & 0.1118 \\
\cellcolor{white} & IG$^2$ \cite{zhuo2024ig2} & 0.1824 & 0.2676 & 0.0851 & 0.3044 & 0.3582 & \underline{0.0538} & 0.2569 & 0.3447 & 0.0878 \\
\rowcolor{palegray}
\cellcolor{white} & AGI \cite{pan2021explaining} & 0.1447 & 0.2507 & 0.1060 & 0.2242 & 0.3040 & 0.0798 & 0.2067 & 0.2993 & 0.0927 \\
\cellcolor{white} & EIG \cite{jha2020enhanced} & 0.1327 & 0.2467 & 0.1140 & 0.2436 & 0.3222 & 0.0787 & 0.1942 & 0.3013 & 0.1071 \\
\rowcolor{palegray}
\cellcolor{white} & MIG \cite{zaher2024manifold} & 0.1522 & 0.2616 & 0.1093 & 0.2173 & 0.2898 & 0.0724 & 0.1600 & 0.2849 & 0.1249 \\
\cellcolor{white} & GIG \cite{kapishnikov2021guided} & 0.2536 & 0.3304 & 0.0769 & 0.2842 & 0.3367 & \textbf{0.0524} & 0.2771 & 0.3667 & 0.0896 \\
\cmidrule(lr){2-11}
\rowcolor{palegray}
\cellcolor{white} & MA-GIG (SD1) & 0.2467 & 0.3333 & 0.0867 & 0.3247 & 0.3869 & 0.0622 & 0.3020 & 0.3871 & 0.0851 \\
\cellcolor{white} & MA-GIG (SD1, w/ Slerp) & 0.2542 & 0.3382 & 0.0840 & \underline{0.3329} & 0.3933 & 0.0604 & 0.3118 & 0.3893 & 0.0776 \\
\rowcolor{palegray}
\cellcolor{white} & MA-GIG (SD2) & 0.2598 & 0.3409 & 0.0811 & 0.3284 & 0.3900 & 0.0616 & 0.3082 & 0.3838 & \textbf{0.0756} \\
\cellcolor{white} & MA-GIG (SD2, w/ Slerp) & 0.2638 & \textbf{0.3478} & 0.0840 & \textbf{0.3429} & \textbf{0.3982} & 0.0553 & 0.3120 & 0.3887 & \underline{0.0767} \\
\rowcolor{palegray}
\cellcolor{white} & MA-GIG (MAR) & \textbf{0.2707} & \underline{0.3409} & \textbf{0.0702} & 0.3262 & 0.3809 & 0.0547 & \underline{0.3280} & \underline{0.4087} & 0.0807 \\
\cellcolor{white} & MA-GIG (MAR, w/ Slerp) & 0.2627 & 0.3344 & \underline{0.0718} & 0.3238 & 0.3829 & 0.0591 & \textbf{0.3351} & \textbf{0.4129} & 0.0778 \\
\rowcolor{palegray}
\cellcolor{white} & MA-GIG (KD) & 0.2360 & 0.3107 & 0.0747 & 0.3131 & 0.3678 & 0.0547 & 0.3024 & 0.3833 & 0.0809 \\
\multirow{-15}{*}{\rotatebox[origin=c]{90}{\textbf{ImageNet2012}}}
\cellcolor{white} & MA-GIG (KD, w/ Slerp) & 0.2333 & 0.3076 & 0.0742 & 0.2927 & 0.3511 & 0.0584 & 0.2973 & 0.3809 & 0.0836 \\
\bottomrule
\end{tabular}
}
\vspace{-0.1cm}
\end{table*}

\subsection{Additional Qualitative Results} \label{app:qual}

As shown in Figures~\ref{fig:qual_imagenet}, \ref{fig:qual_oxfordflower}, and \ref{fig:qual_oxfordpet}, we provide additional qualitative comparisons of MA-GIG against baselines: G$\times$I~\citep{shrikumar2016not}, IG~\citep{sundararajan2017axiomatic}, IG$^{2}$~\citep{zhuo2024ig2}, AGI~\citep{pan2021explaining}, EIG~\citep{jha2020enhanced}, MIG~\citep{zaher2024manifold}, and GIG~\citep{kapishnikov2021guided}. Compared to these baselines, MA-GIG generates sparse, noise-reduced attributions and more accurately localizes class-relevant regions.

\begin{figure}[ht]
    \centering
    \includegraphics[width=\linewidth]{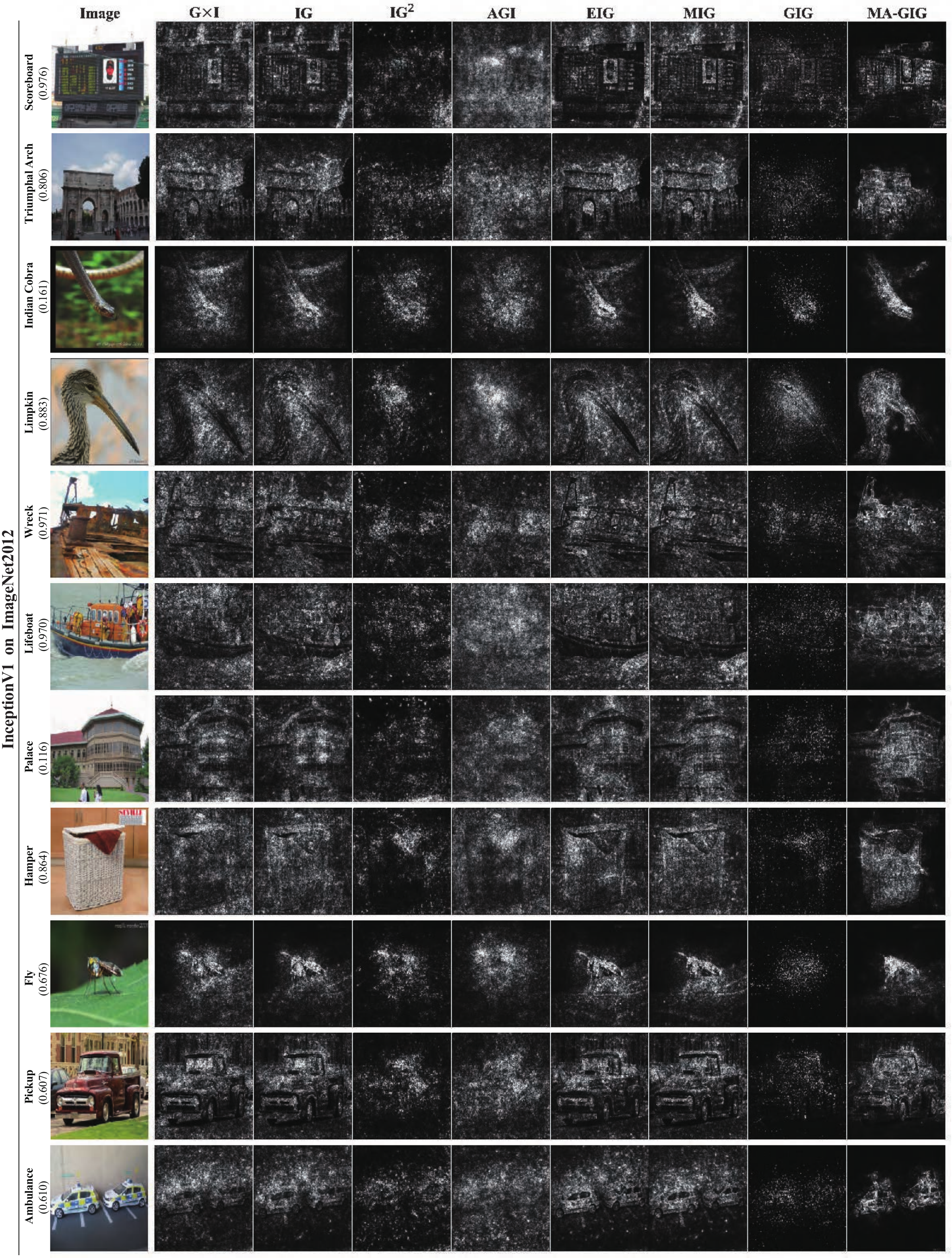}
    \caption{\textbf{Qualitative comparison against baselines on ImageNet (InceptionV1).} Left labels indicate the predicted class, and numbers in brackets denote confidence.}
    \label{fig:qual_imagenet}
\end{figure}

\begin{figure}[ht]
    \centering
    \includegraphics[width=\linewidth]{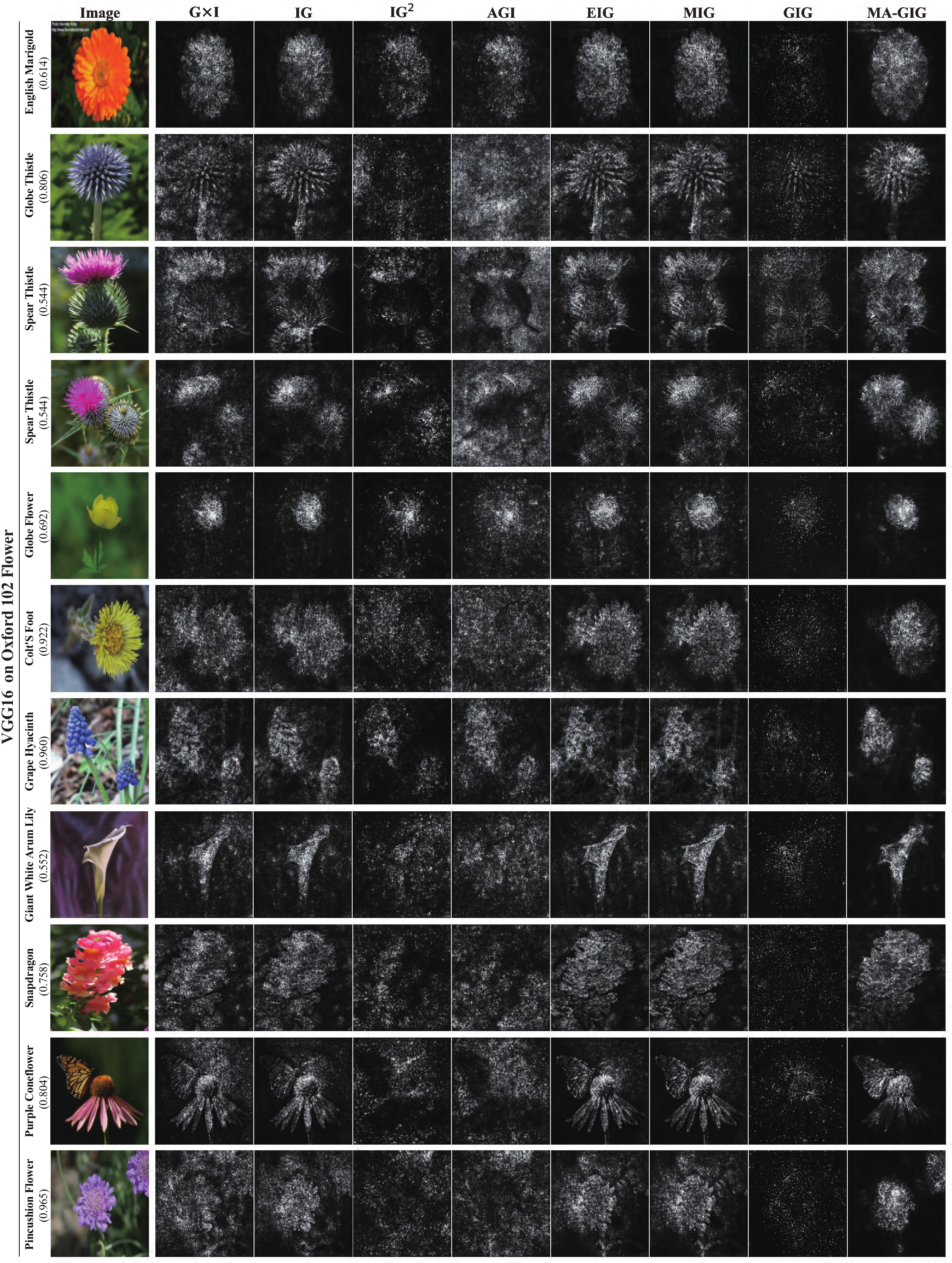}
    \caption{\textbf{Qualitative comparison on Oxford 102 Flower (VGG16).} Labels on the left indicate predicted classes, and numbers in brackets denote prediction confidence.}
    \label{fig:qual_oxfordflower}
\end{figure}

\begin{figure}[ht]
    \centering
    \includegraphics[width=\linewidth]{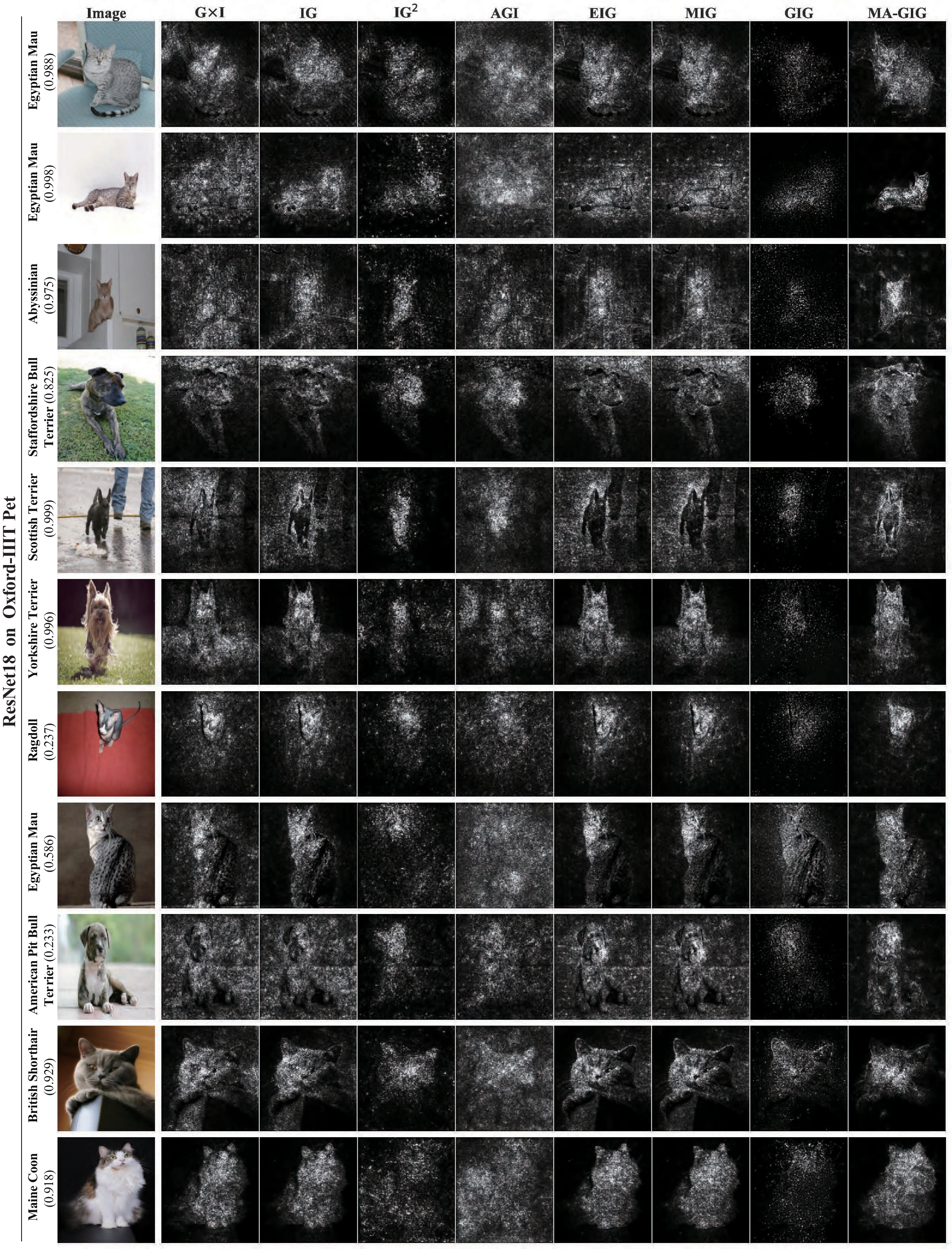}
    \caption{\textbf{Qualitative comparison on Oxford-IIIT Pet (ResNet18).} Labels on the left indicate predicted classes, and numbers in brackets denote prediction confidence.}
    \label{fig:qual_oxfordpet}
\end{figure}

\clearpage

\subsection{Intermediate Path Comparison} \label{app:path}
As an intermediate path feature analysis, we present extensive qualitative comparisons of our MA-GIG against baseline path-based methods—IG~\citep{sundararajan2017axiomatic}, EIG~\citep{jha2020enhanced}, MIG~\citep{zaher2024manifold}, and GIG~\citep{kapishnikov2021guided}—in Figures~\ref{fig:add_qual_1}, \ref{fig:add_qual_2}, \ref{fig:add_qual_3}, and \ref{fig:add_qual_4}. These visualizations cover various classifiers on the ImageNet2012, Oxford-IIIT Pet, and Oxford 102 Flower datasets.

The visualized intermediate steps (second and third rows in each panel) provide empirical evidence for the reliability of our framework. First, regarding gradient behavior, pixel-space guidance methods like GIG often suffer from manifold deviation, leading to gradients evaluated on out-of-distribution data. This results in noisy attributions.

In contrast, MA-GIG constructs the integration path from the baseline ($x'$) toward the input ($x$) within the latent manifold. This trajectory significantly reduces attribution noise in regions irrelevant to the predicted class. This robustness stems from our robust path-finding strategy: by prioritizing the restoration of latent features with the smallest gradients first (lower 5\%), the method carefully navigates the manifold, effectively avoiding high-frequency noise inherent in less robust directions.

Consequently, as shown in Figures~\ref{fig:add_qual_1}, \ref{fig:add_qual_2}, and \ref{fig:add_qual_3}, the manifold-aligned trajectory results in a distinct accumulation of gradients. As visually apparent in the \textit{Gradients $\times \Delta$ Features} rows, MA-GIG aggregates significant gradients primarily in the path segments proximal to the input image (i.e., the final stages of the path). This indicates that the method captures high-level semantic features precisely when the salient object is fully formed. In contrast, baseline methods often trigger large gradients near the black baseline, implying a sensitivity to off-manifold artifacts or simple pixel intensity.

The visualization of path features further highlights the structural integrity of our approach. While some intermediate path features may appear artifact-like, they crucially preserve the fundamental structure of the salient object. This phenomenon arises from the algorithmic design of our guided integration, which prioritizes the recovery of less important (low-gradient) background features first, while deferring the incorporation of the most important (high-gradient) features until the path approaches the target input. By delaying the inclusion of high-gradient features to the end of the path, MA-GIG ensures that the most critical structural details are aggregated at the most semantically relevant stage, resulting in the superior localization of salient objects evidenced in the attribution results.

Furthermore, our results on the OxfordFlower dataset demonstrate that MA-GIG faithfully reveals spurious correlations~\citep{lapuschkin2019unmasking}. In cases where watermarks frequently co-occur with specific classes (e.g., sunflower, sweet pea, bird of paradise, and colt's foot), MA-GIG explicitly highlights these artifacts in the attribution maps. This confirms that our method not only localizes valid semantic features but also transparently exposes model biases derived from dataset artifacts.

\begin{figure}[!b]
    \centering
    \includegraphics[width=0.48\linewidth]{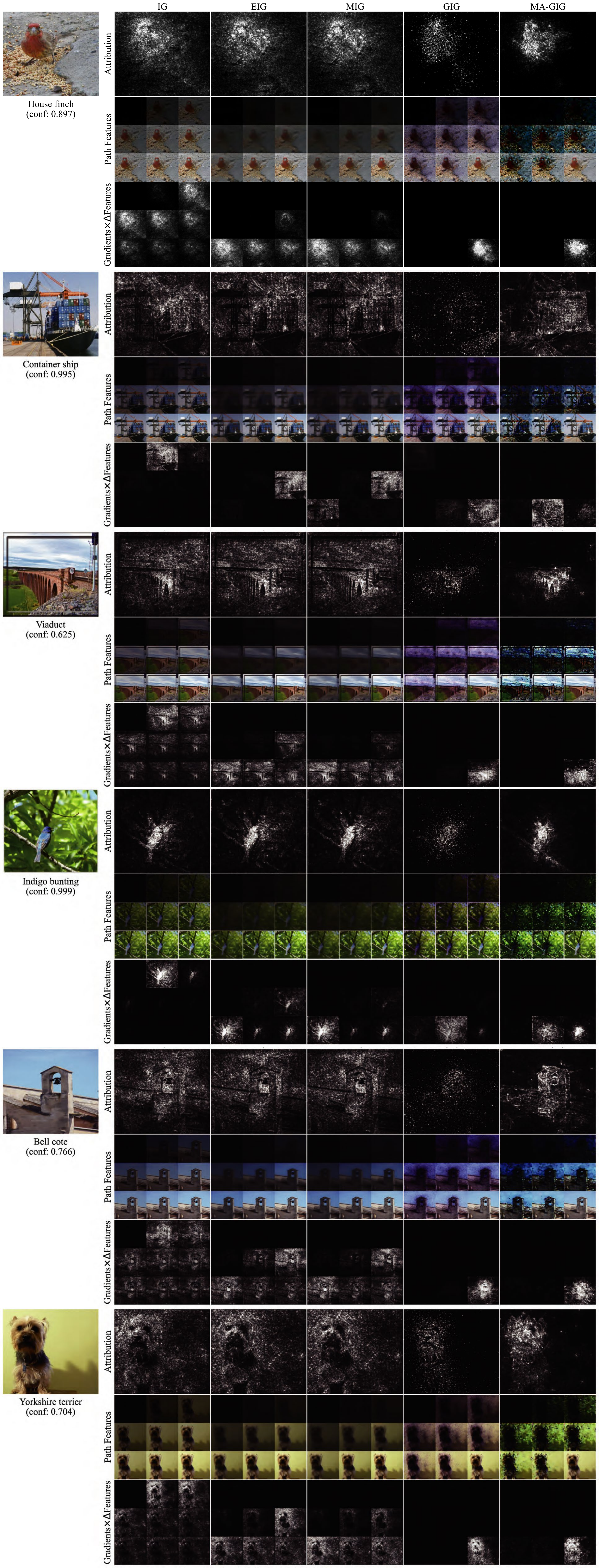}
    \hfill
    \includegraphics[width=0.48\linewidth]{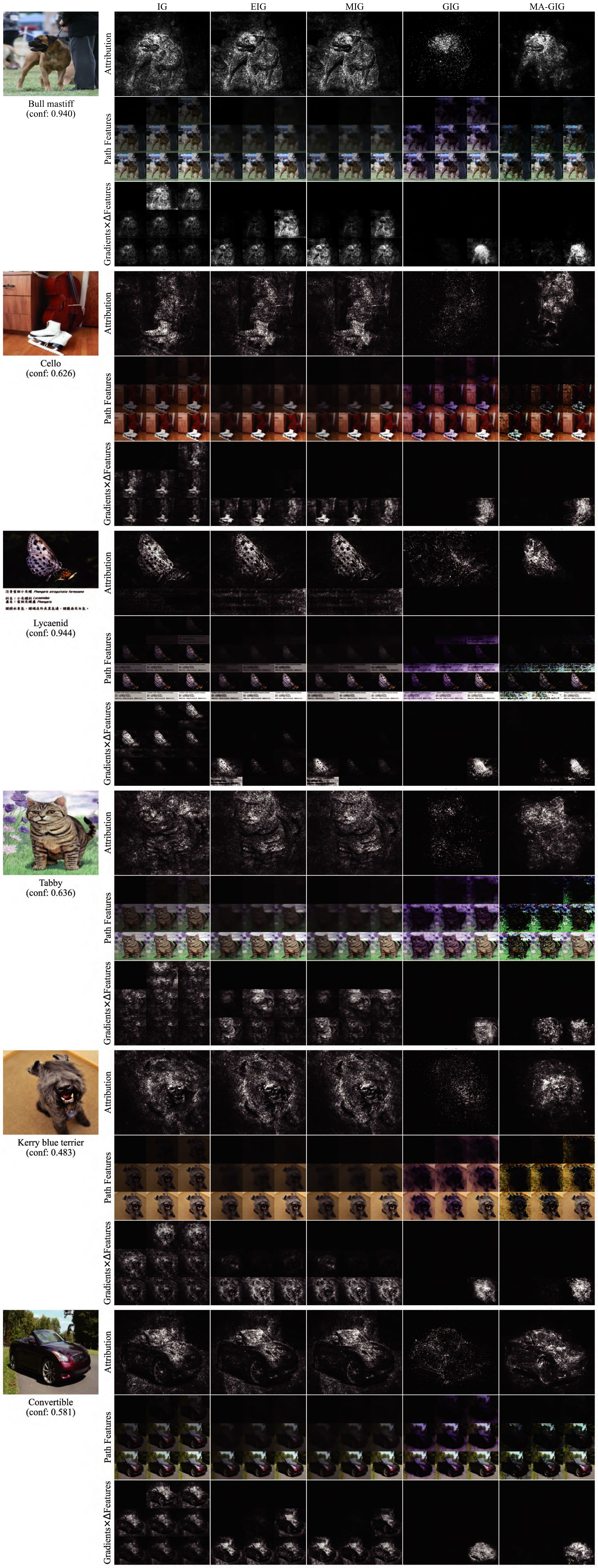}
    \caption{\textbf{Qualitative comparison on ImageNet2012 (ResNet18 \& VGG16).} The left and right panels display results for the ResNet18 and VGG16 classifiers, respectively. For each example, the top row presents the attribution maps of IG, EIG, MIG, GIG, and MA-GIG. The second and third rows visualize the evolution of path features and their corresponding gradients, sampled at nine equally spaced intervals along the integration path, demonstrating how MA-GIG aggregates relevant attributions. (Conf.: Confidence)}
    \label{fig:add_qual_1}
\end{figure}

\begin{figure}[!b]
    \centering
    \includegraphics[width=0.48\linewidth]{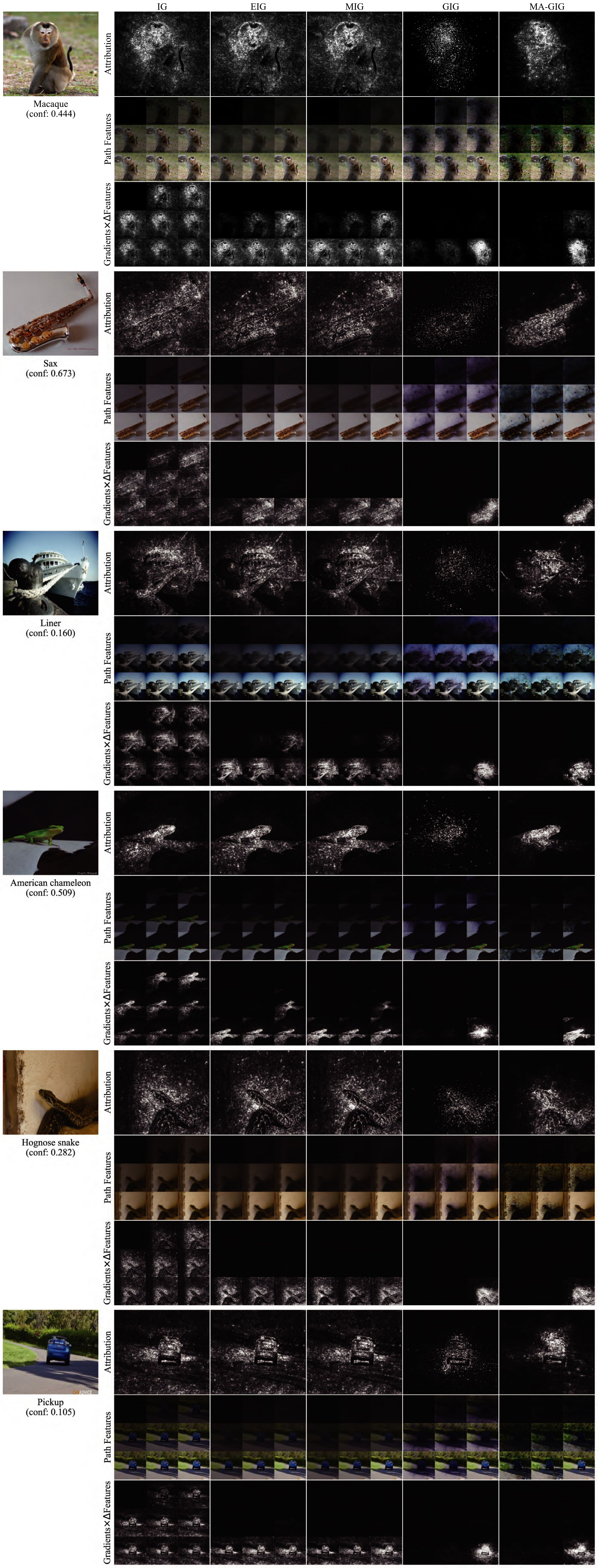}
    \hfill
    \includegraphics[width=0.48\linewidth]{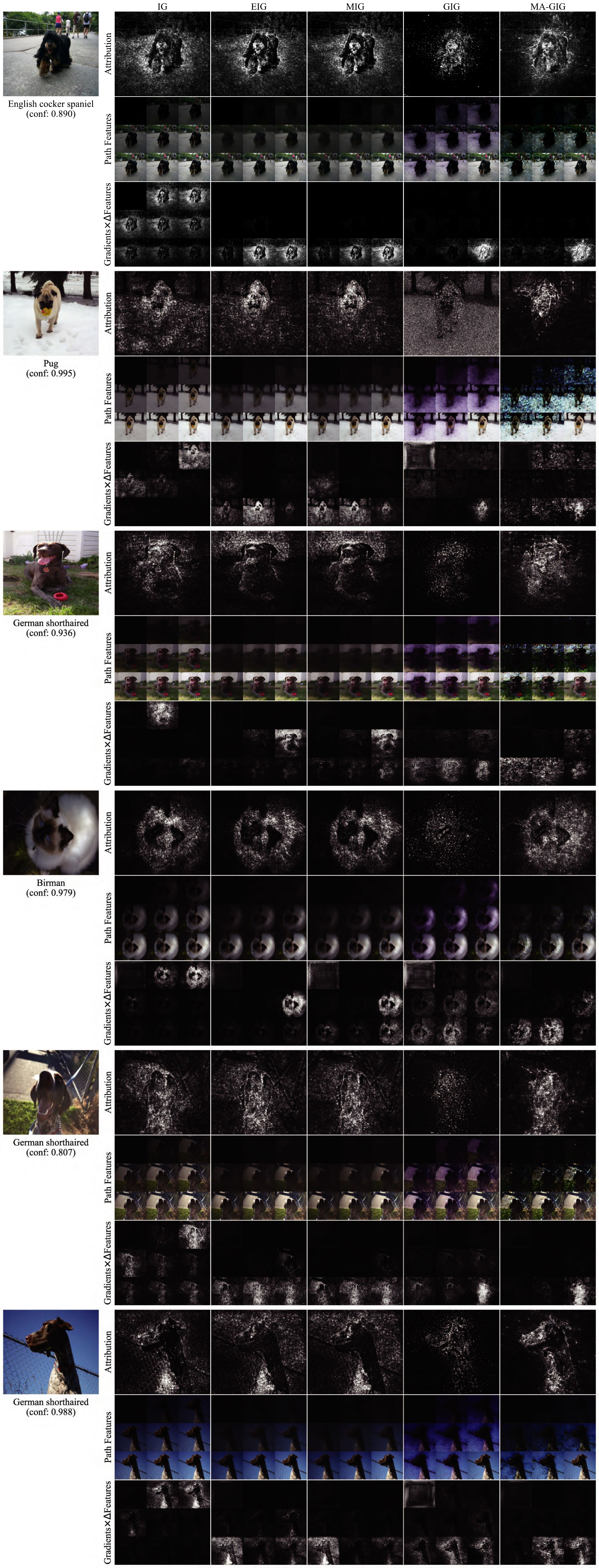}
    \caption{\textbf{Qualitative comparison on ImageNet2012 (InceptionV1) and Oxford-IIIT Pet (ResNet18).} For each example, the top row presents the attribution maps of IG, EIG, MIG, GIG, and MA-GIG. The second and third rows visualize the evolution of path features and their corresponding gradients, sampled at nine equally spaced intervals along the integration path, demonstrating how MA-GIG aggregates relevant attributions. (Conf.: Confidence)}
    \label{fig:add_qual_2}
\end{figure}

\begin{figure}[!b]
    \centering
    \includegraphics[width=0.48\linewidth]{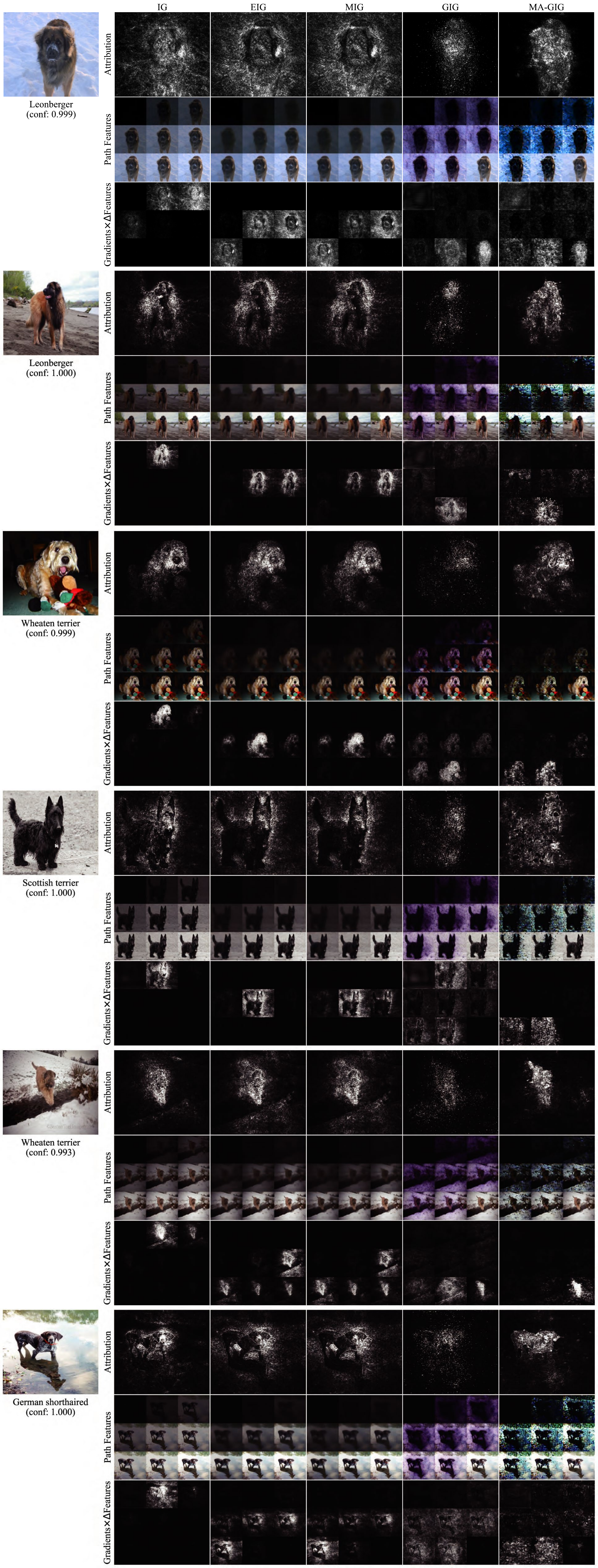}
    \hfill
    \includegraphics[width=0.48\linewidth]{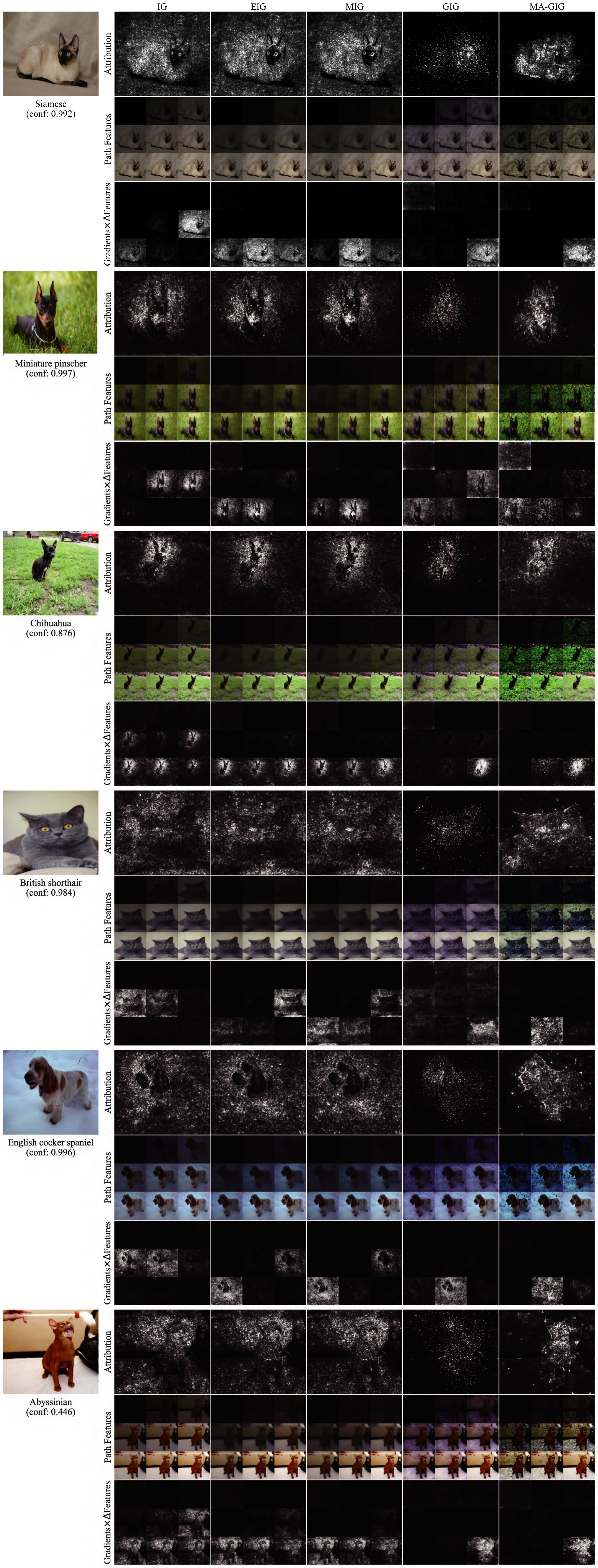}
    \caption{\textbf{Qualitative comparison on Oxford-IIIT Pet (VGG16 \& InceptionV1).} For each example, the top row presents the attribution maps of IG, EIG, MIG, GIG, and MA-GIG. The second and third rows visualize the evolution of path features and their corresponding gradients, sampled at nine equally spaced intervals along the integration path, demonstrating how MA-GIG aggregates relevant attributions. (Conf.: Confidence)}
    \label{fig:add_qual_3}
\end{figure}

\begin{figure}[!b]
    \centering
    \includegraphics[width=0.48\linewidth]{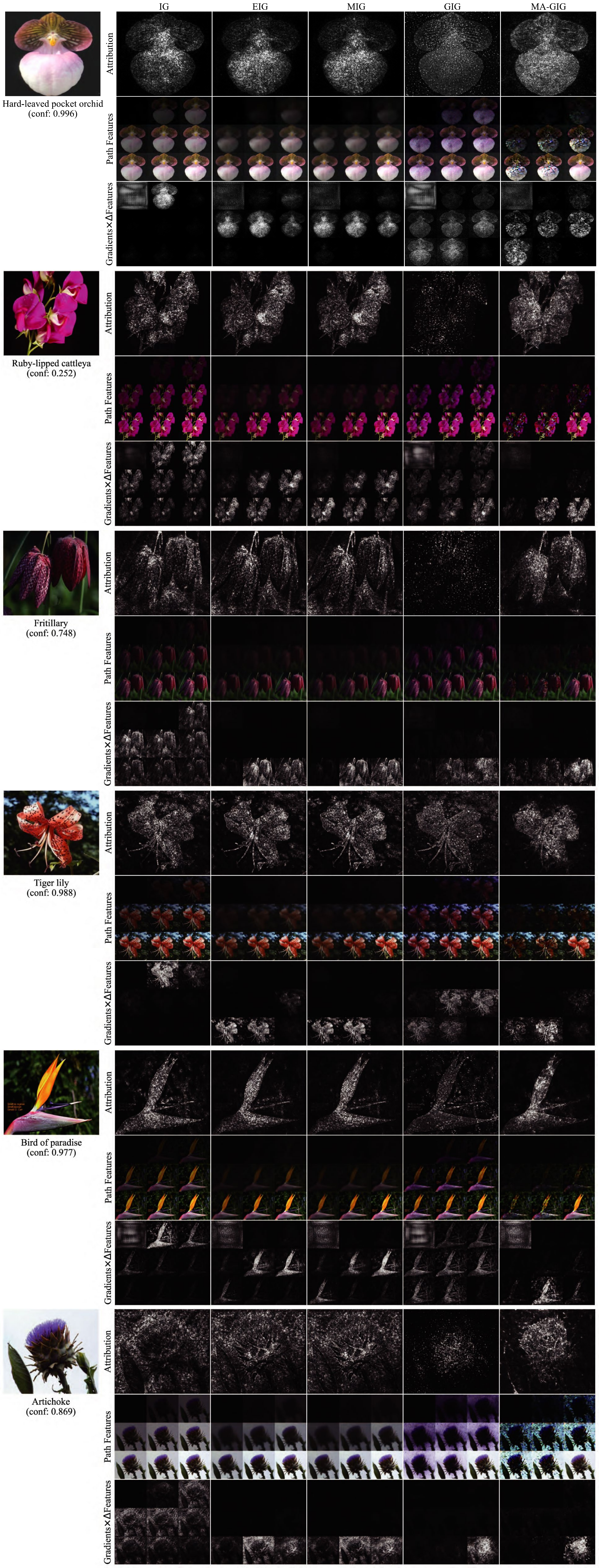}
    \hfill
    \includegraphics[width=0.48\linewidth]{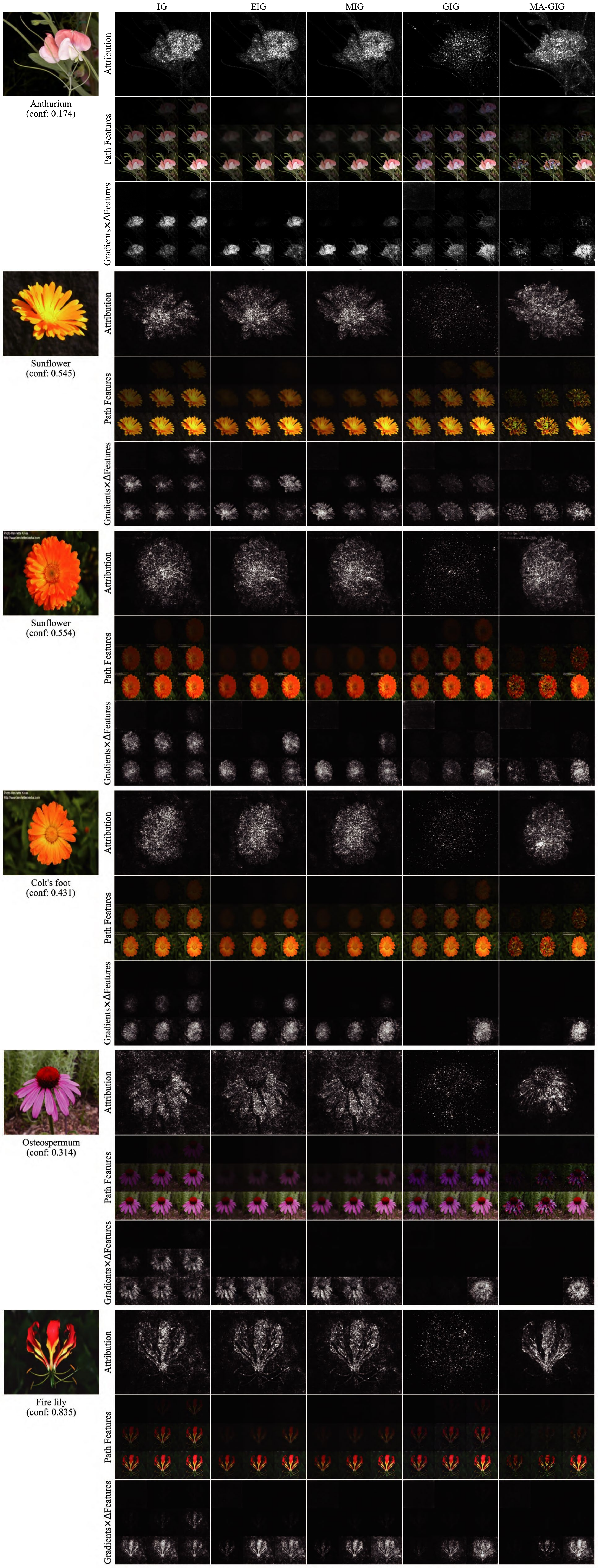}
    \caption{\textbf{Qualitative comparison on Oxford 102 flower (ResNet18 \& InceptionV1).} For each example, the top row presents the attribution maps of IG, EIG, MIG, GIG, and MA-GIG. The second and third rows visualize the evolution of path features and their corresponding gradients, sampled at nine equally spaced intervals along the integration path, demonstrating how MA-GIG aggregates relevant attributions. (Conf.: Confidence)}
    \label{fig:add_qual_4}
\end{figure}


\clearpage

\section{Manifold Adherence Analysis}
\label{app:manifold}

\subsection{LPIPS-Based Path Plausibility Analysis}
\label{app:manifold_ext}

We extend the LPIPS-based path analysis presented in Section~\ref{sec:manifold_analysis} to Oxford-IIIT Pet and Oxford 102 Flower. As illustrated in Figure~\ref{fig:manifold_appendix}, {MA-GIG (green)} maintains consistently lower perceptual deviation throughout the integration path compared to {GIG (red)}, mirroring the results observed on ImageNet.

Table~\ref{tab:manifold_summary} summarizes the quantitative results across all three datasets using ResNet18. MA-GIG consistently achieves an approximate 28\% reduction in the Area Under the Curve (AUC) for LPIPS distance. These results support that MA-GIG yields more perceptually coherent paths, although LPIPS should be interpreted as a proxy rather than direct evidence of true manifold membership.

\begin{figure}[ht]
    \centering
    \begin{subfigure}[b]{0.48\linewidth}
        \centering
        \includegraphics[width=\linewidth]{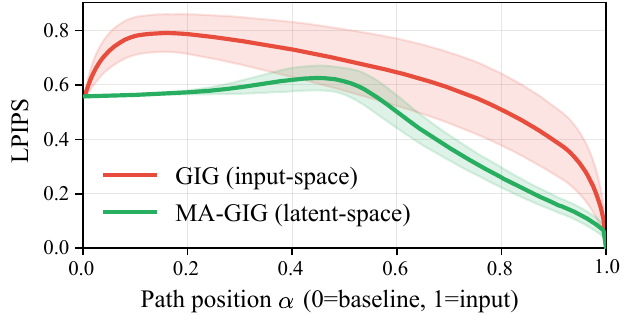}
        \caption{Oxford-IIIT Pet}
        \label{fig:manifold_oxfordpet}
    \end{subfigure}
    \hfill
    \begin{subfigure}[b]{0.48\linewidth}
        \centering
        \includegraphics[width=\linewidth]{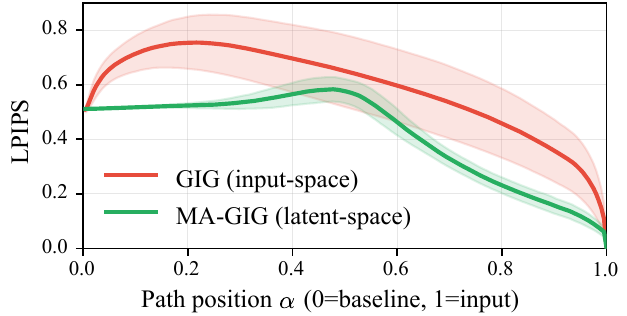}
        \caption{Oxford 102 Flower}
        \label{fig:manifold_oxfordflower}
    \end{subfigure}
    \caption{\textbf{LPIPS distance along the integration path on fine-grained datasets.} We measure LPIPS-based perceptual deviation of each intermediate sample $\gamma(\alpha)$ for (a) Oxford-IIIT Pet and (b) Oxford 102 Flower using ResNet18.}
    \label{fig:manifold_appendix}
\end{figure}

\begin{table}[ht]
    \centering
    \caption{\textbf{LPIPS-based path plausibility summary.} We report the mean Area Under the Curve (AUC) of the LPIPS distance trajectory. Lower AUC indicates lower perceptual deviation along the path.}
    \label{tab:manifold_summary}
    \begin{tabular}{lc>{\columncolor{lightgray!90}}cc}
        \toprule
        \textbf{Dataset} & \textbf{GIG AUC} & \textbf{MA-GIG AUC} & \textbf{Improvement} \\
        \midrule
        ImageNet & $0.665 \pm 0.105$ & $\mathbf{0.475} \pm 0.029$ & + 28.5\% \\
        Oxford-IIIT Pet & $0.631 \pm 0.095$ & $\mathbf{0.458} \pm 0.024$ & + 27.4\% \\
        Oxford 102 Flower & $0.588 \pm 0.102$ & $\mathbf{0.420} \pm 0.019$ & + 28.7\% \\
        \bottomrule
    \end{tabular}
\end{table}

\subsection{Distributional Diagnostics Beyond LPIPS}
\label{app:manifold_distributional}

Because LPIPS is a perceptual proxy, we further compare intermediate-path samples against the real-image distribution at $K=200$ steps on ImageNet/ResNet18. These diagnostics do not prove exact manifold membership, but provide relative evidence that MA-GIG path samples remain closer to real-image support than GIG under the same step budget.

For each method, we collect the intermediate samples along the attribution path and embed them in feature spaces defined by the ResNet18 classifier and CLIP. In the classifier feature space, we compute PRDC recall and coverage using kNN-based support estimates~\citep{kynkaanniemi2019improved,naeem2020reliable}, measure the fraction of path points falling inside the real-image kNN support as support ratio, and compute Mahalanobis AUC from a Gaussian fit to real-image features~\citep{mahalanobis1936generalized,lee2018simple}. In the CLIP feature space~\citep{radford2021learning}, we repeat the kNN support test to assess whether the same trend holds under an external semantic embedding. AUC values are computed over the normalized path index.

\begin{table}[ht]
    \centering
    \caption{\textbf{Distributional diagnostics for intermediate path samples on ImageNet/ResNet18.} Relative improvement (Rel. Improvement) is computed directionally, so higher values indicate better performance for both $\uparrow$ and $\downarrow$ metrics.}
    \label{tab:distributional_diagnostics}
    \begin{tabular}{lc>{\columncolor{lightgray!90}}cc}
        \toprule
        \textbf{Metric} & \textbf{GIG} & \textbf{MA-GIG} & \textbf{Rel. Improvement} \\
        \midrule
        PRDC Coverage ($\uparrow$) & 0.7047 & \textbf{0.7445} & + 5.65\% \\
        PRDC Recall ($\uparrow$) & 0.1378 & \textbf{0.2055} & + 49.13\% \\
        Support Ratio ($\uparrow$) & 0.4024 & \textbf{0.5658} & + 40.61\% \\
        Mahalanobis AUC ($\downarrow$) & 29.9345 & \textbf{29.8362} & + 0.33\% \\
        CLIP kNN AUC, $k=10$ ($\downarrow$) & 0.7641 & \textbf{0.7155} & + 6.36\% \\
        \bottomrule
    \end{tabular}
\end{table}

PRDC recall and coverage measure support overlap in classifier feature space. Support ratio is the fraction of path points inside a real-image kNN support threshold. Mahalanobis AUC measures distance to a Gaussian fit of real-image features, and CLIP kNN repeats the support test in an external semantic embedding space. CLIP kNN with $k=1$ and $k=5$ shows the same trend ($0.7063 \to 0.6589$ and $0.7445 \to 0.6963$). Energy scores were mixed, so we treat them only as a complementary diagnostic rather than primary evidence of manifold adherence.

\subsection{Classifier-Confidence-Based Manifold Alignment Analysis}
\label{app:manifold_ext2}

While LPIPS measures perceptual similarity, it acts primarily as a proxy for visual smoothness and does not explicitly guarantee that intermediate states remain within the data distribution recognized by the classifier. 
To further assess whether the generated path better aligns with data-supported regions
, we conducted an additional experiment analyzing the classifier confidence along the integration path.

We measured the average softmax probability (confidence) of the target class across the interpolation steps $\alpha \in [0, 1]$, averaged over 100 samples. As illustrated in \Cref{fig:manifold_cls_appendix}, the baseline {GIG (red)} exhibits prolonged uncertainty in the intermediate regions, indicating that the path largely traverses off-manifold areas and only recovers classifier trust near the input. In contrast, {MA-GIG (green)} mitigates this issue by demonstrating an earlier and more robust recovery of target class probability. This suggests that performing guidance in the latent space helps the integration path adhere better to the high-density regions of the data distribution compared to input-space guidance.

Table~\ref{tab:manifold_conf_summary} summarizes the quantitative analysis of manifold alignment based on classifier confidence. We report the Area Under the Curve (AUC) of the confidence score trajectory across the different classifiers. As shown in the results, MA-GIG consistently achieves higher AUC values compared to GIG across all evaluated models. Specifically, using ResNet18, MA-GIG shows an improvement of 2.43\% in confidence AUC, demonstrating that our method higher classifier confidence throughout the integration path.

\begin{figure}[ht]
    \centering
    \begin{subfigure}[b]{0.34\linewidth}
        \centering
        \includegraphics[width=\linewidth]{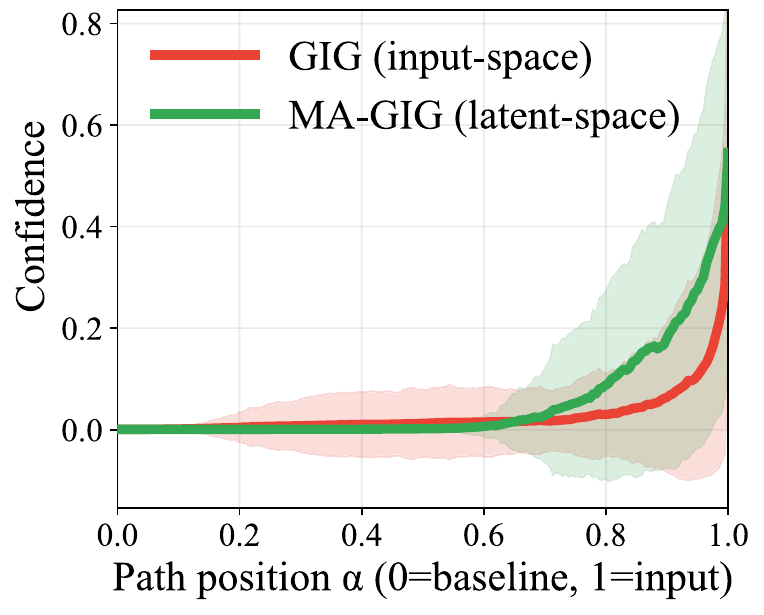}
        \caption{ResNet18}
        \label{fig:manifold_cls_resnet}
    \end{subfigure}
    \hfill
    \begin{subfigure}[b]{0.32\linewidth}
        \centering
        \includegraphics[width=\linewidth]{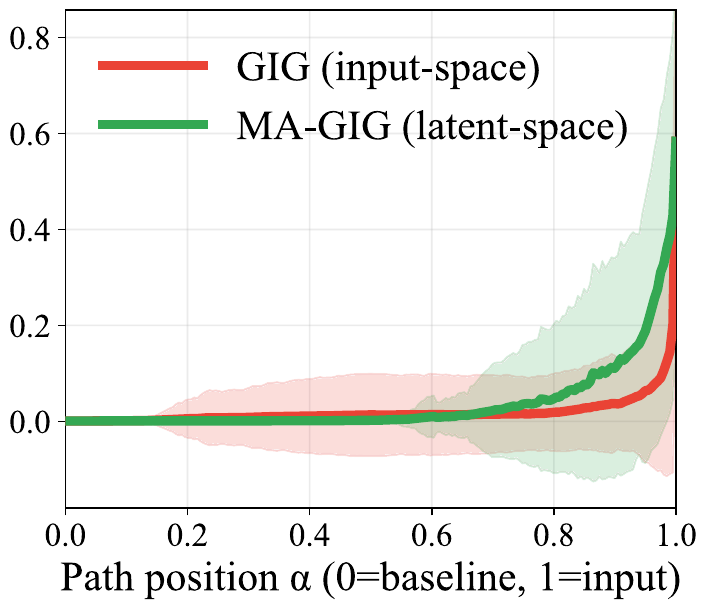}
        \caption{VGG16}
        \label{fig:manifold_cls_vgg}
    \end{subfigure}
    \hfill
    \begin{subfigure}[b]{0.32\linewidth}
        \centering
        \includegraphics[width=\linewidth]{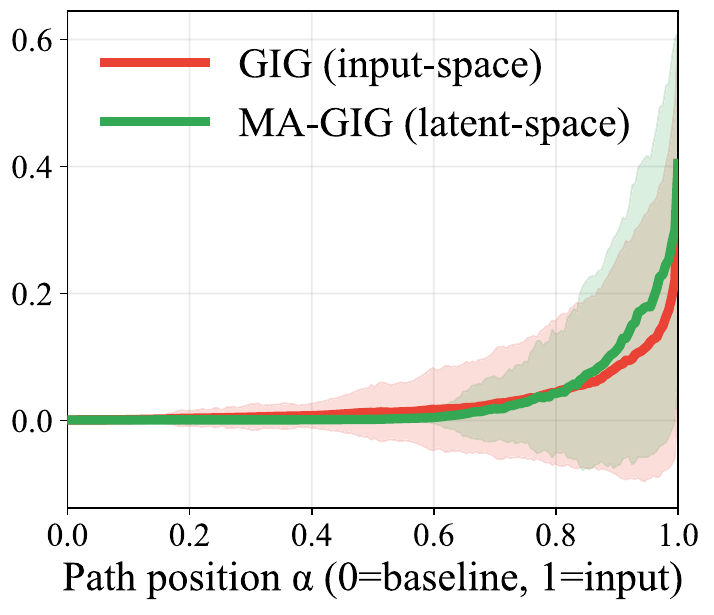}
        \caption{InceptionV1}
        \label{fig:manifold_cls_inception}
    \end{subfigure}
    \caption{\textbf{Average classifier confidence along the integration path ($\alpha$ from 0 to 1) on three classifiers.} We measure the softmax score of each intermediate sample $\gamma(\alpha)$ for (a) ResNet18, (b) VGG16, and (c) InceptionV1 on ImageNet2012.
    }
    \label{fig:manifold_cls_appendix}
\end{figure}

\begin{table}[ht]
    \centering
    \caption{\textbf{Classifier-Confidence-Based manifold alignment summary.} We report the mean Area Under the Curve (AUC) of the classifier confidence score trajectory. Higher AUC indicates better manifold alignment.}
    \label{tab:manifold_conf_summary}
    \begin{tabular}{lc>{\columncolor{lightgray!90}}cc}
        \toprule
        \textbf{Classifier} & \textbf{GIG AUC} & \textbf{MA-GIG AUC} & \textbf{Improvement} \\
        \midrule
        ResNet18    & $0.0267 \pm 0.0581$ & $\mathbf{0.0511} \pm 0.0634$ & + 2.43\% \\
        VGG16       & $0.0193 \pm 0.0591$ & $\mathbf{0.0369} \pm 0.0542$ & + 1.75\% \\
        InceptionV1 & $0.0288 \pm 0.0543$ & $\mathbf{0.0314} \pm 0.0386$ & + 0.26\% \\
        \bottomrule
    \end{tabular}
\end{table}

\section{Computational Cost \& Runtime}
\label{app:cost}

We report wall-clock runtime measurements to contextualize the computational overhead of MA-GIG relative to existing methods. All experiments were conducted on a system equipped with four NVIDIA B200 192GB GPUs.

Table~\ref{tab:runtime} presents the runtime comparison alongside attribution faithfulness metrics on Oxford-IIIT Pet with ResNet18. While MA-GIG incurs higher computational cost than simple gradient-based methods such as IG, it remains competitive with optimization-based approaches like IG$^2$ and GIG. Notably, MA-GIG achieves substantially faster inference than MIG, which requires iterative geodesic computation in latent space. We argue that the moderate runtime overhead of MA-GIG is justified by its significant improvements in faithfulness metrics: MA-GIG achieves the highest DiffID score (0.4637) while requiring only ${\sim}5$ seconds per image, compared to MIG which takes ${\sim}47$ seconds yet yields lower faithfulness.

\begin{table}[ht]
    \centering
    \caption{\textbf{Runtime analysis on Oxford-IIIT Pet (ResNet18).} We report attribution faithfulness metrics alongside wall-clock time averaged over 10 samples (mean $\pm$ std). Best results are \textbf{bolded}.}
    \label{tab:runtime}
    \vspace{2mm}
    \footnotesize
    \setlength{\tabcolsep}{3pt}
    \begin{tabular}{l cccc}
        \toprule
        \textbf{Method} & \textbf{DiffID ($\uparrow$)} & \textbf{Ins ($\uparrow$)} & \textbf{Del ($\downarrow$)} & \textbf{Time (sec/img)} \\
        \midrule
        AGI~\citep{pan2021explaining} & 0.2787 & 0.4453 & 0.1667 & \textbf{0.16} $\pm$ 0.03 \\
        IG~\citep{sundararajan2017axiomatic} & 0.3790 & 0.5186 & 0.1396 & 0.97 $\pm$ 0.12 \\
        IG$^2$~\citep{zhuo2024ig2} & 0.3823 & 0.5264 & 0.1441 & 1.68 $\pm$ 0.18 \\
        GIG~\citep{kapishnikov2021guided} & 0.3634 & 0.5093 & 0.1459 & 2.17 $\pm$ 0.26 \\
        EIG~\citep{jha2020enhanced} & 0.3595 & 0.4964 & 0.1369 & 2.27 $\pm$ 0.10 \\
        MIG~\citep{zaher2024manifold} & 0.3486 & 0.4889 & 0.1402 & 46.53 $\pm$ 1.35 \\
        \midrule
        \rowcolor{gray!15}
        MA-GIG & \textbf{0.4637} & \textbf{0.5886} & \textbf{0.1249} & 5.10 $\pm$ 0.18 \\
        \bottomrule
    \end{tabular}
\end{table}



\end{document}